\newtheorem{thm}{Theorem}%
\newtheorem{defi}{Definition}%
\newtheorem{prop}{Proposition}
\newtheorem{lemma}{Lemma}
\def \1{{\mathds{1}}}
\def \bx{\boldsymbol{x}}
\def \bp{\boldsymbol{p}}
\def \phat{\widehat{\boldsymbol{p}}}
\def \qhat{\widehat{\boldsymbol{q}}}
\def\R{{\mathbb R}}
\newcommand{\cC}{\mathcal{C}}
\newcommand{\cW}{\mathcal{W}}
\newcommand{\cT}{\mathcal{T}}
\newcommand{\cQ}{\mathcal{Q}}
\newcommand{\cI}{\mathcal{I}}
\newcommand{\cO}{\mathcal{O}}
\newcommand{\PP}{\mathbb{P}}
\newcommand{\N}{\mathbb{N}}
\DeclareMathOperator{\diam}{diam}
\title{Exact Minimum-Volume Confidence Set Intersection for Multinomial Outcomes}
\author{
\IEEEauthorblockN{Heguang Lin$^{1}$}
\IEEEauthorblockA{heguangl@scripps.edu}
\and
\IEEEauthorblockN{Binhao Chen$^{2}$}
\IEEEauthorblockA{binhao\_chen@brown.edu}
\and
\IEEEauthorblockN{Mengze Li$^{3}$}
\IEEEauthorblockA{mli562@wisc.edu}
\and
\IEEEauthorblockN{Daniel Pimentel-Alarc\'on$^{3}$}
\IEEEauthorblockA{pimentelalar@wisc.edu}
\and
\IEEEauthorblockN{Matthew L. Malloy$^{3}$}
\IEEEauthorblockA{matthew.malloy@wisc.edu}
\thanks{
$^{1}$The Scripps Research Institute, La Jolla, CA, USA. 
$^{2}$Brown University, Providence, RI, USA.
$^{3}$University of Wisconsin–Madison, Madison, WI, USA. 
}
}
\begin{document}

\maketitle

\begin{abstract} 

Computation of confidence sets is central to data science and machine learning, serving as the workhorse of A/B testing and underpinning the operation and analysis of reinforcement learning algorithms \cite{jamieson2014lil}. Among all valid confidence sets for the multinomial parameter, \emph{minimum-volume confidence sets} (MVCs) are optimal in that they minimize average volume \cite{malloy2021ISIT}, but they are defined as level sets of an exact $p$-value that is discontinuous and difficult to compute. Rather than attempting to characterize the geometry of MVCs directly, this paper studies a fundamental and practically motivated decision problem: given two observed multinomial outcomes, can one certify whether their MVCs intersect? We present a certified, tolerance-aware algorithm for this intersection problem. The method exploits the fact that likelihood ordering induces halfspace constraints in log-odds coordinates, enabling adaptive geometric partitioning of parameter space and computable lower and upper bounds on $p$-values over each cell. For three categories, this yields an efficient and provably sound algorithm that either certifies intersection, certifies disjointness, or returns an indeterminate result when the decision lies within a prescribed margin. We further show how the approach extends to higher dimensions. The results demonstrate that, despite their irregular geometry, MVCs admit reliable certified decision procedures for core tasks in A/B testing. 
\end{abstract}

\section{Introduction} 

Confidence sets, regions, and intervals are fundamental tools in data science, statistical inference, and machine learning, capturing a range of plausible beliefs of the parameters of a model. For simplicity of computation and analysis, most approaches to construct confidence sets rely on approximation or bounds that are loose in the small sample regime \cite{casella2021statistical, chafai2009confidence, malloy2021ISIT}. While these approaches are often optimal asymptotically, tighter confidence sets in the small sample regime can reduce sample complexity in A/B testing, reinforcement learning algorithms, and other problems in applied data science \cite{malloy2020optimal, jamieson2013finding, malloy2015contamination, malloy2012quickest, malloy2013sample}.

For categorical data, constructing tight confidence sets for the multinomial parameter is a long-studied problem. Recent work \cite{malloy2021ISIT}, building on classical dualities between hypothesis testing and confidence sets \cite{sterne1954some, crow1956confidence, brown1995optimal}, showed that confidence sets defined as level sets of the exact $p$-value attain a minimum-volume optimality property when averaged over empirical outcomes. These \emph{minimum-volume confidence sets} (MVCs) are therefore statistically optimal in a precise sense.

Despite their optimality, MVCs present significant computational challenges. Membership testing for a single parameter value requires evaluating an exact $p$-value defined as a sum over multinomial outcomes, a task that scales poorly with sample size and dimension \cite{resin2020simple}. Moreover, because the $p$-value is a discontinuous function of the parameter, MVCs exhibit highly irregular geometry: they are generally nonconvex, may be disconnected, and do not admit simple geometric characterizations \cite{lin2022geometry}. These features make it difficult to use MVCs directly in even basic inference tasks.

This paper focuses on one such task that is central to A/B testing: given two observed multinomial outcomes, decide whether their corresponding MVCs intersect. An intersection certifies that there exists a parameter value consistent with both outcomes at a given confidence level, while disjointness implies a statistically significant difference between the underlying distributions. Naive approaches based on gridding or continuous relaxations fail to provide guarantees due to the discontinuous and fragmented structure of MVCs.
Figure~\ref{fig:simplex_comparison} illustrates this difficulty in a simple multinomial example with three categories. The asymptotic  chi-square (Likelihood-ratio/Wilks) confidence regions \cite{wilks1938large}, shown in Figure~\ref{fig:chisq_simplex}, are disjoint and would suggest a statistically significant difference between the two empirical outcomes.
In contrast, the exact minimum-volume confidence sets defined via p-value membership, shown in Figure~\ref{fig:exact_simplex}, do intersect, correctly certifying the existence of a parameter value consistent with both observations at the prescribed confidence level. This discrepancy highlights the limitations of asymptotic approximations in the small-sample regime and motivates the need for exact, algorithmically tractable methods for reasoning about intersections of MVCs.

\begin{figure}[t]
  \centering
  \begin{subfigure}[t]{0.48\textwidth}
    \centering
    \includegraphics[width=\linewidth]{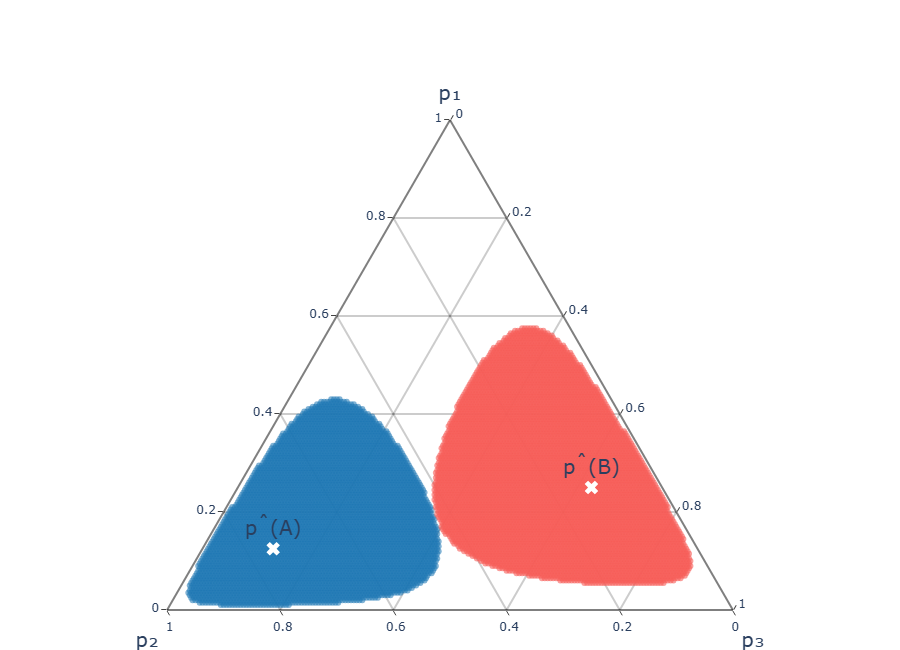}
    \caption{Chi-square (LRT/Wilks) confidence sets.}
    \label{fig:chisq_simplex}
  \end{subfigure}
  \hfill
  \begin{subfigure}[t]{0.48\textwidth}
    \centering
    \includegraphics[width=\linewidth]{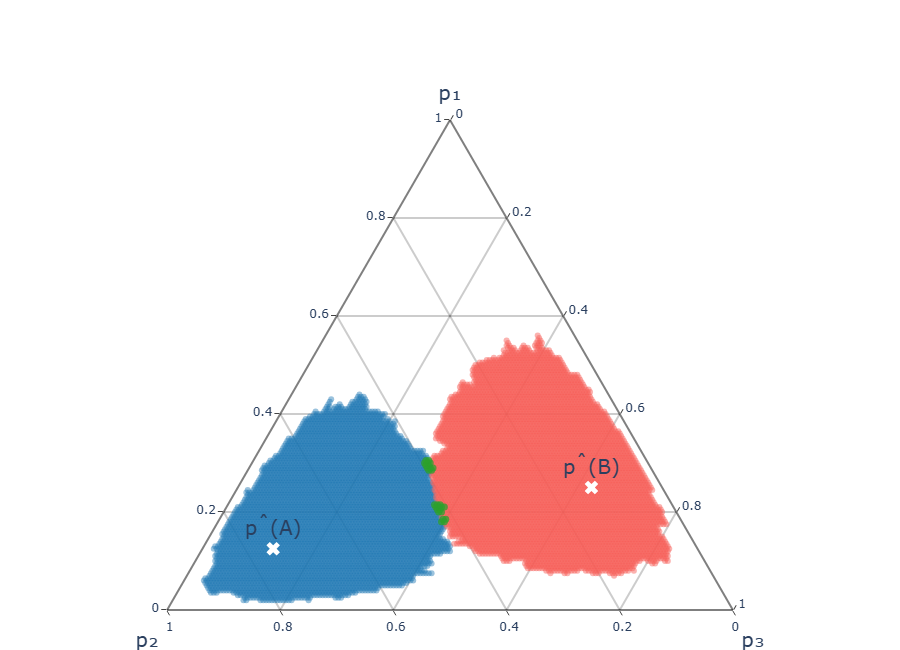}
    \caption{Minimum volume confidence sets.}
    \label{fig:exact_simplex}
  \end{subfigure}

  \caption{
  Comparison of asymptotic and exact confidence sets on the simplex $\Delta_3$.
  The chi-square approximation (left) yields non-intersecting regions,
  whereas the MVCs (right) exhibit an intersection. Setting: $n=8, k =3$ and $\alpha=0.17$, two observed outcomes are $A = [1 , 6,1]$ and  $B = [2,1,5].$
  }
  \label{fig:simplex_comparison}
\end{figure}

Our contribution is a certified decision procedure for this intersection problem. Rather than attempting to compute MVCs explicitly, we exploit a key structural property: likelihood comparisons between multinomial outcomes become linear inequalities in log-odds coordinates. This observation enables us to partition parameter space into geometric cells on which tight lower and upper bounds on the exact $p$-value can be computed. Using adaptive refinement, we obtain an algorithm that either certifies intersection, certifies disjointness, or reports uncertainty when the decision lies within a prescribed tolerance. We present the method in detail for the case of three categories, then extend to general $k$ dimensions.

\section{Notation and Basic Definitions}
Let $X_1, \dots, X_n$ be i.i.d.\ samples of a categorical random variable that takes one of $k$ possible values from a finite number of categories $\mathcal{X} = \{x_1, \dots, x_k\}$. The empirical distribution $\phat$ is the relative proportion of occurrences of each element of $\mathcal{X}$ in $X_1, \dots, X_n$, i.e., $\phat = [\nicefrac{n_1}{n}, \dots, \nicefrac{n_k}{n}]$, where ${n}_i = \sum_{j=1}^n {\1_{\{ X_j = x_i \} }  }$. Let $\Delta_{k,n}$ denote the discrete simplex from $n$ samples over $k$ categories:
\begin{align*}
\Delta_{k,n} \ := \ \left\{ \phat \in \{0, \ \nicefrac{1}{n},\ \nicefrac{2}{n}, \ \dots, \ 1\}^k \ : \  \sum_{i=1}^k \widehat{p}_i =1  \right\},
\end{align*}
and define $m = |\Delta_{k,n}|= { n+k-1 \choose k-1}$. 
Denote the continuous simplex as $\Delta_k = \left\{\bp \in [0,1]^k :  \sum_i p_i =1  \right\}$. We use $\mathcal{P}(\Delta_{k,n})$ to denote the power set of $\Delta_{k,n}$, and $\mathcal{P}(\Delta_k)$ to denote the set of Lebesgue measurable subsets of $\Delta_k$. For any $\mathcal{S} \subset \Delta_{k,n}$ we write $\mathbb{P}_{\boldsymbol{p}}(\mathcal{S})$ as shorthand for  $\mathbb{P}_{\bp} \left(\left\{ X \in \mathcal{X}^n : \phat(X) \in \mathcal{S}\right\} \right)$, where $\mathbb{P}_{\bp}( \cdot)$ denotes the probability measure under the multinomial parameter $\bp \in\Delta_{k}$. Lastly, let $\mathbb{R}^{k+} := \{\bx \in \mathbb{R}^{k}: x_i > 0, i=1,\dots, k \} $.


\begin{defi}(Confidence set)  Let $\mathcal{C}_{\alpha}( \phat): \Delta_{k,n} \rightarrow \mathcal{P}(\Delta_{k})$ be a set valued function that maps an observed empirical distribution $\phat$ to a subset of the $k$-simplex. $\mathcal{C}_{\alpha}( \phat)$   is a \emph{confidence set} at confidence level $1-\alpha$ if the following holds:

\begin{eqnarray} \label{eqn:cr}
\sup_{\bp \in \Delta_{k}} \mathbb{P}_{\bp}  \left( \bp  \not \in \mathcal{C}_{\alpha}(\phat) \right) \ \leq \ \alpha.
\end{eqnarray}
\label{def:cr}
\end{defi}

\begin{defi} ($p$-value) Fix an outcome $\phat$.  The 
$p$-value as a function of the null hypothesis $\bp$ is given by:
\begin{align} \label{eqn:partial}
\rho_{\phat}(\bp) \ = \ \sum_{\qhat \in \Delta_{k,n} : \mathbb{P}_{\bp}(\qhat) \leq \mathbb{P}_{\bp}(\phat)  } \mathbb{P}_{\bp}  \left( \qhat  \right).
\end{align}
\end{defi}
\noindent
For a fixed outcome $\widehat{\bp}$, we write $\rho(\bp)$ for simplicity.

\begin{prop}
\label{def:minvol}
(Minimum volume confidence set (MVCs) \cite{malloy2021ISIT}).  
The MVCs are defined as 
\begin{eqnarray*} 
\mathcal{C}_{\alpha}^\star(\phat) \ := \ \big\{\bp \in\Delta_{k} \ : \ \rho_{\phat}(\bp) \geq \alpha \big\},
\end{eqnarray*}
and satisfy 
\begin{eqnarray*}
\sum_{\phat \in \Delta_{k,n} } \mathrm{vol} \left( \mathcal{C}_{\alpha}^\star(\phat) \right) \leq
\sum_{\phat \in \Delta_{k,n} } \mathrm{vol} \left( \mathcal{C}_{\alpha}(\phat) \right)
\end{eqnarray*}
for any confidence set $\mathcal{C}_{\alpha}( \cdot )$; here $\mathrm{vol}( \cdot ) $ denotes the Lebesgue measure.  A proof can be found in \cite{malloy2021ISIT}. 
\end{prop}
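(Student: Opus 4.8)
The statement bundles two claims: that $\mathcal{C}_\alpha^\star$ is itself a valid confidence set in the sense of Definition~\ref{def:cr}, and that it is volume-optimal among all such sets. The plan is to establish both through the classical \emph{inversion-of-tests} duality between confidence sets and families of hypothesis tests, exploiting the fact that the coverage constraint decouples across the parameter $\bp$. The first move is to turn the objective into a pointwise quantity. Writing each volume as an integral of an indicator and exchanging the finite sum over $\phat \in \Delta_{k,n}$ with the integral over $\Delta_k$ (Tonelli, all integrands nonnegative), I would obtain
\[
\sum_{\phat \in \Delta_{k,n}} \mathrm{vol}\!\left(\mathcal{C}_\alpha(\phat)\right) \;=\; \int_{\Delta_k} N(\bp)\, d\bp,
\qquad
N(\bp) \,:=\, \big|\{\phat \in \Delta_{k,n} : \bp \in \mathcal{C}_\alpha(\phat)\}\big|.
\]
The set $A(\bp) := \{\phat : \bp \in \mathcal{C}_\alpha(\phat)\}$ is exactly the acceptance region of the test of the null $\bp$ induced by $\mathcal{C}_\alpha$, and the coverage requirement \eqref{eqn:cr} is equivalent to the family of constraints $\mathbb{P}_{\bp}(A(\bp)) \ge 1-\alpha$, one per $\bp$. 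Crucially these do not couple different parameter values, so minimizing $\int N(\bp)\,d\bp$ reduces to minimizing the cardinality $|A(\bp)|$ separately at (almost) every $\bp$.

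Next I would fix a $\bp$ for which the outcome probabilities $\{\mathbb{P}_{\bp}(\phat)\}_{\phat}$ are all distinct. Among subsets $A \subseteq \Delta_{k,n}$ with $\mathbb{P}_{\bp}(A) \ge 1-\alpha$, a minimum-cardinality one is obtained greedily by discarding a maximum-cardinality set of least-likely outcomes whose total mass stays below $\alpha$. Since each discarded outcome contributes one unit to the count and its probability to the budget $\alpha$, greedy removal in increasing order of $\mathbb{P}_{\bp}(\cdot)$ is optimal, and with distinct probabilities this removed set is unique. I would then identify it with $\{\phat : \rho_{\phat}(\bp) < \alpha\}$: by the definition of $\rho_{\phat}$, the $p$-value is the cumulative probability of all outcomes at least as unlikely as $\phat$, so $\rho_{\phat}(\bp) < \alpha$ holds precisely for the smallest-probability outcomes whose running tail has not yet reached $\alpha$. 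Hence $A^\star(\bp) = \{\phat : \rho_{\phat}(\bp) \ge \alpha\} = \{\phat : \bp \in \mathcal{C}^\star_\alpha(\phat)\}$ attains the pointwise minimum, giving $N^\star(\bp) \le N(\bp)$ for every admissible $\mathcal{C}_\alpha$.

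To close the inequality I would integrate $N^\star(\bp) \le N(\bp)$ over $\Delta_k$; the exceptional set of $\bp$ admitting a tie $\mathbb{P}_{\bp}(\phat)=\mathbb{P}_{\bp}(\qhat)$ is a finite union of proper algebraic subvarieties of the simplex and hence Lebesgue-null, so it does not affect the integral. It remains to confirm that $\mathcal{C}^\star_\alpha$ is genuinely a confidence set, so the comparison is not vacuous. For arbitrary $\bp$, $\mathbb{P}_{\bp}(\bp \notin \mathcal{C}^\star_\alpha(\phat)) = \mathbb{P}_{\bp}(\{\phat : \rho_{\phat}(\bp) < \alpha\})$; letting $\phat_0$ be the most likely outcome in this rejection set, every rejected outcome is at least as unlikely as $\phat_0$, so the rejection mass is bounded by $\rho_{\phat_0}(\bp) < \alpha$, which establishes \eqref{eqn:cr}.

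The main obstacle I anticipate is bookkeeping around ties and measurability rather than any deep argument. I must check that $N(\bp)$ is measurable (it is a finite sum of indicators of the measurable sets $\mathcal{C}_\alpha(\phat)$), justify that greedy removal solves the cardinality problem even at the boundary of the mass budget, and make the tie-variety argument rigorous so that pointwise optimality for a.e. $\bp$ transfers to the integral. The conceptual core---that coverage decouples over $\bp$ and that minimum-cardinality acceptance regions are likelihood level sets---is precisely the duality underlying the construction of $\mathcal{C}^\star_\alpha$, and matches the proof referenced in \cite{malloy2021ISIT}.
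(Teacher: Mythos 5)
The paper itself offers no proof of this proposition; it defers entirely to \cite{malloy2021ISIT}, and your argument reconstructs precisely the proof given there: rewrite the summed volumes via Tonelli as $\int_{\Delta_k} N(\bp)\,d\bp$ with $N(\bp)$ the number of outcomes whose confidence set covers $\bp$, note that the coverage requirement \eqref{eqn:cr} decouples into per-$\bp$ constraints $\mathbb{P}_{\bp}(A(\bp))\ge 1-\alpha$ on acceptance regions, solve the resulting minimum-cardinality problem at each $\bp$ by discarding least-likely outcomes, and identify the minimizer with the $p$-value superlevel set $\{\phat:\rho_{\phat}(\bp)\ge\alpha\}$. Your coverage verification is also sound: the rejection set $\{\phat:\rho_{\phat}(\bp)<\alpha\}$ is contained in the likelihood tail of its most likely member $\phat_0$, so its mass is at most $\rho_{\phat_0}(\bp)<\alpha$, and this argument is tie-robust. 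One point to tighten, which you partially flag yourself under ``boundary of the mass budget'': the a.e.-exceptional set must include not only likelihood ties $\mathbb{P}_{\bp}(\phat)=\mathbb{P}_{\bp}(\qhat)$ but also parameters at which some cumulative tail mass equals $\alpha$ exactly. The true optimum discards least-likely outcomes with total mass \emph{at most} $\alpha$ (your phrase ``stays below $\alpha$'' is too strict), whereas $\mathcal{C}^\star_\alpha$ discards only those with strict cumulative mass below $\alpha$; when some prefix sum hits $\alpha$ exactly, the acceptance region of $\mathcal{C}^\star_\alpha$ exceeds the pointwise minimum by one outcome, so $N^\star(\bp)\le N(\bp)$ can fail there. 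Since this degenerate set is also cut out by polynomial equalities in $\bp$ (each $\mathbb{P}_{\bp}(S)$ is a nonconstant polynomial for proper $S\subset\Delta_{k,n}$), it is Lebesgue-null and the integrated inequality is unaffected --- but it is a distinct null set from the tie variety and should be named as such.
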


\section{AB-Testing Intersection Problem}\label{sec:ab-problem}

In this paper we focus exclusively on a single computational task motivated by AB testing:
given two observed multinomial outcomes, decide whether their minimum-volume confidence sets
(MVCs) intersect.

Let $\widehat{\bp}^{(A)},\widehat{\bp}^{(B)}\in\Delta_{k,n}$ denote two observed empirical
distributions from $n$ i.i.d.\ categorical samples
over $k$ categories. For a confidence level $1-\alpha$, recall from Section~II that the MVC
associated with $\widehat{\bp}$ is
\[
\cC_\alpha^\star(\widehat{\bp}) \;=\; \{\bp\in\Delta_k:\ \rho_{\widehat{\bp}}(\bp)\ge \alpha\}.
\]
Our goal is to decide whether
\begin{equation}\label{eq:intersection-decision}
\cC_\alpha^\star(\widehat{\bp}^{(A)})\ \cap\ \cC_\alpha^\star(\widehat{\bp}^{(B)})\ \neq\ \emptyset.
\end{equation}

Equivalently, define $\rho_A(\bp):=\rho_{\widehat{\bp}^{(A)}}(\bp)$ and
$\rho_B(\bp):=\rho_{\widehat{\bp}^{(B)}}(\bp)$. Then~\eqref{eq:intersection-decision} holds iff
\[
\max_{\bp\in\Delta_k}\ \min\{\rho_A(\bp),\rho_B(\bp)\} \;\ge\; \alpha.
\]
We emphasize that we do \emph{not} seek to compute or visualize MVC boundaries like \cite{lin2022geometry}; we only seek
a reliable yes/no answer to~\eqref{eq:intersection-decision}.

In finite-precision computation, decisions can be numerically delicate when the optimum is near the threshold $\alpha$. We therefore introduce a \emph{decision margin} $\tau>0$ and target a
certified decision with a buffer around $\alpha$.

\begin{defi}[Robust intersection decision]\label{def:robust-decision}
Fix $\alpha\in(0,1)$ and $\tau>0$. We say that the MVCs \emph{robustly intersect} if there exists
$\bp\in\Delta_k$ such that
\[
\rho_A(\bp)\ge \alpha+\tau
\qquad\text{and}\qquad
\rho_B(\bp)\ge \alpha+\tau.
\]
We say that the MVCs are \emph{robustly disjoint} if for all $\bp\in\Delta_k$,
\[
\min\{\rho_A(\bp),\rho_B(\bp)\}\le \alpha-\tau.
\]
\end{defi}

Our algorithm will either return 
\begin{itemize}
    \item \textsf{INTERSECT} (certifying robust intersection)
    \item \textsf{DISJOINT} (certifying robust disjointness), or
    \item  \textsf{UNCERTAIN} when the instance cannot be resolved within the chosen tolerances.
\end{itemize}
 In addition to $\tau$, the algorithm takes a \emph{geometric refinement tolerance} $\varepsilon>0$ controlling the maximum cell diameter in an
adaptive partition.

\section{Certified Intersection Test for $k=3$}\label{sec:k3}

We first present the method cleanly for $k=3$, where the simplex is two-dimensional and the
algorithm is especially effective. Section~\ref{sec:general-k} then generalizes the approach to
arbitrary $k$.

\subsection{Log-odds coordinates and outcome probabilities}\label{sec:uv}

For $k=3$ we write $\bp=(p_1,p_2,p_3)\in\Delta_3^\circ$, , all coordinates are
strictly positive so that log-odds coordinates are finite. Define log-odds coordinates
$(u,v)\in\R^2$ by
\[
u=\log\frac{p_1}{p_3},
\qquad
v=\log\frac{p_2}{p_3},
\]
with inverse map
\begin{equation}\label{eq:softmax-3}
p_1(u,v)=\frac{e^u}{1+e^u+e^v},\quad
p_2(u,v)=\frac{e^v}{1+e^u+e^v},\quad
p_3(u,v)=\frac{1}{1+e^u+e^v}.
\end{equation}
For any count vector $r=(r_1,r_2,r_3)\in\N^3$ with $r_1+r_2+r_3=n$, define the multinomial
coefficient
\[
\kappa(r)=\frac{n!}{r_1!\,r_2!\,r_3!}.
\]
Then the probability of observing $r$ under parameter $\bp(u,v)$ is
\[
\PP_{\bp(u,v)}(r)\;=\;\kappa(r)\,p_1(u,v)^{r_1}\,p_2(u,v)^{r_2}\,p_3(u,v)^{r_3}.
\]
Taking logs and using~\eqref{eq:softmax-3} gives
\begin{equation}\label{eq:logprob-3}
\log \PP_{\bp(u,v)}(r)
=
\log\kappa(r)\;+\; r_1 u + r_2 v\;-\;n\log(1+e^u+e^v).
\end{equation}

In the presence of zero counts, the intersection decision is understood as allowing $\bp$ to lie on the boundary of $\Delta_k$. As shown in Section~\ref{sec:boundary}, this can be reduced to a finite collection of lower-dimensional multinomial distributions on the faces of the simplex.

\subsection{Likelihood-ordering halfspaces}\label{sec:halfspaces}

Fix an observed outcome $\hat r=(\hat r_1,\hat r_2,\hat r_3)$ (equivalently
$\widehat{\bp}=\hat r/n\in\Delta_{3,n}$). Recall that the p-value is
\[
\rho_{\widehat{\bp}}(\bp)
=
\sum_{r:\ \PP_{\bp}(r)\le \PP_{\bp}(\hat r)} \PP_{\bp}(r).
\]
A key fact is that the likelihood-ordering comparison $\PP_{\bp(u,v)}(r)\le \PP_{\bp(u,v)}(\hat r)$
is \emph{linear} in $(u,v)$.

\begin{lemma}[Halfspace form in $(u,v)$]\label{lem:halfspace-3}
For any $r$ and fixed $\hat r$, the inequality $\PP_{\bp(u,v)}(r)\le \PP_{\bp(u,v)}(\hat r)$ is
equivalent to
\begin{equation}\label{eq:halfspace-ineq-3}
(r_1-\hat r_1)u + (r_2-\hat r_2)v \;\le\; \log\kappa(\hat r)-\log\kappa(r).
\end{equation}
\end{lemma}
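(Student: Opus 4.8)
The plan is to reduce the multiplicative likelihood comparison to a linear inequality by taking logarithms and exploiting the cancellation of the normalizing term. Since $\bp(u,v)\in\Delta_3^\circ$ has strictly positive coordinates, all outcome probabilities are positive, so the strict monotonicity of the logarithm makes $\PP_{\bp(u,v)}(r)\le \PP_{\bp(u,v)}(\hat r)$ equivalent to $\log\PP_{\bp(u,v)}(r)\le \log\PP_{\bp(u,v)}(\hat r)$.

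Next I would substitute the explicit log-probability formula~\eqref{eq:logprob-3} on both sides. Writing out each side gives $\log\kappa(r)+r_1u+r_2v-n\log(1+e^u+e^v)$ and $\log\kappa(\hat r)+\hat r_1u+\hat r_2v-n\log(1+e^u+e^v)$, respectively. The crucial observation---and the reason the comparison is linear rather than merely convex---is that the normalizing (log-partition) term $-n\log(1+e^u+e^v)$ is identical on both sides, because it depends only on $(u,v)$ and not on the count vector, so it cancels exactly.

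After this cancellation, collecting the terms linear in $(u,v)$ on the left and the constant multinomial-coefficient terms on the right transforms the inequality into $(r_1-\hat r_1)u+(r_2-\hat r_2)v\le \log\kappa(\hat r)-\log\kappa(r)$, which is exactly~\eqref{eq:halfspace-ineq-3}. Each manipulation---taking logs, substituting~\eqref{eq:logprob-3}, cancelling, and rearranging---is reversible, so the equivalence holds in both directions, as claimed.

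There is essentially no hard obstacle here: the entire content of the lemma is the cancellation of the log-partition function, which decouples the shared normalization from the outcome-dependent exponent and converts the likelihood ordering into a halfspace constraint in log-odds coordinates. The only point requiring care is the positivity assumption $\bp\in\Delta_3^\circ$ needed to take logarithms; boundary cases with zero probabilities fall outside this chart and are handled separately via the face reduction described in Section~\ref{sec:boundary}.
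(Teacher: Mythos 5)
Your proposal is correct and follows essentially the same route as the paper's proof: take logarithms via~\eqref{eq:logprob-3}, observe that the shared normalizing term $-n\log(1+e^u+e^v)$ cancels, and rearrange into the linear inequality~\eqref{eq:halfspace-ineq-3}. Your added remarks on the strict positivity of $\bp(u,v)\in\Delta_3^\circ$ (justifying the log step) and on the reversibility of each manipulation are points the paper leaves implicit, but they do not change the argument.
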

\begin{proof}
Subtract~\eqref{eq:logprob-3} for $r$ and for $\hat r$. The shared term $-n\log(1+e^u+e^v)$ cancels.
Exponentiating yields~\eqref{eq:halfspace-ineq-3}.
\end{proof}

Thus, for fixed $\hat r$, each $r$ defines a halfspace in $(u,v)$, and the set of $(u,v)$ where
$r$ is in the p-value tail (i.e.\ has probability no larger than $\hat r$) is a halfspace boundary.

\subsection{A compact search domain}\label{sec:domain}

The coordinates $(u,v)$ range over $\R^2$, so we restrict the search to a compact region without
losing any robust decision. The following crude bound is sufficient.

\begin{lemma}[A universal upper bound on the p-value]\label{lem:mpx-bound}
Let $m=|\Delta_{3,n}|$. For any observed $\hat r$ and any $\bp\in\Delta_3$,
\[
\rho_{\widehat{\bp}}(\bp)\ \le\ m\,\PP_{\bp}(\hat r).
\]
\end{lemma}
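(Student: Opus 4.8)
The plan is to bound the p-value by the number of terms in the tail sum times the largest single term. Recall from Definition~2 that
\[
\rho_{\widehat{\bp}}(\bp)
=
\sum_{\qhat \in \Delta_{3,n} : \PP_{\bp}(\qhat) \le \PP_{\bp}(\hat r)} \PP_{\bp}(\qhat).
\]
The defining constraint on the summation index is that each outcome $\qhat$ included in the tail satisfies $\PP_{\bp}(\qhat) \le \PP_{\bp}(\hat r)$. This is the only fact I need.

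First I would observe that every term in the sum is, by the summation constraint itself, bounded above by $\PP_{\bp}(\hat r)$. Hence the entire sum is at most the number of summands times this common upper bound. Second, since the summation ranges over a subset of $\Delta_{3,n}$, the number of summands is at most the total cardinality $|\Delta_{3,n}| = m$. Combining these two observations gives
\[
\rho_{\widehat{\bp}}(\bp)
\;=\;
\sum_{\qhat:\ \PP_{\bp}(\qhat) \le \PP_{\bp}(\hat r)} \PP_{\bp}(\qhat)
\;\le\;
\sum_{\qhat:\ \PP_{\bp}(\qhat) \le \PP_{\bp}(\hat r)} \PP_{\bp}(\hat r)
\;\le\;
m\,\PP_{\bp}(\hat r),
\]
which is exactly the claimed inequality. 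The argument holds uniformly over all $\bp \in \Delta_3$ and all observed $\hat r$, as required.

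There is essentially no obstacle here: the result follows immediately from the two elementary facts that each tail term is dominated by the reference probability $\PP_{\bp}(\hat r)$ and that there are at most $m$ outcomes in the discrete simplex. The only point worth a moment's care is handling parameters $\bp$ on the boundary of the simplex, where some $p_i = 0$ and the log-odds coordinates are undefined; but since the bound is stated purely in terms of the probability measure $\PP_{\bp}$ and does not invoke the $(u,v)$ parametrization, it remains valid on all of $\Delta_3$ by continuity of $\PP_{\bp}(\hat r)$ in $\bp$. The utility of this crude bound, as the surrounding text indicates, is that it lets one discard regions of parameter space where $\PP_{\bp}(\hat r)$ is so small that $m\,\PP_{\bp}(\hat r) < \alpha$, thereby confining the search for robust intersection to a compact subset of log-odds coordinates.
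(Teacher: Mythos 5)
Your proof is correct and follows exactly the same counting-and-bounding argument as the paper: each tail term is dominated by $\PP_{\bp}(\hat r)$ via the summation constraint, and the tail set has at most $m = |\Delta_{3,n}|$ elements. Your closing remark about boundary points $\bp$ with some $p_i = 0$ is a sound (and unneeded-but-harmless) observation, since the bound indeed never invokes the log-odds parametrization.
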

\begin{proof}
The tail set in the p-value definition contains at most $m$ outcomes; each term in the tail sum
is $\le \PP_{\bp}(\hat r)$ by definition. Summing yields the claim. A detailed proof is provided in Appendix~\ref{lem:mpx-bound-detailed}.
\end{proof}

By Lemma~\ref{lem:mpx-bound}, if $\PP_{\bp}(\hat r)<(\alpha-\tau)/m$ then
$\rho_{\widehat{\bp}}(\bp)<\alpha-\tau$. Therefore any $\bp$ that could witness robust intersection
(Definition~\ref{def:robust-decision}) must satisfy \emph{both}
\[
\PP_{\bp}(\hat r^{(A)})\ge (\alpha-\tau)/m
\qquad\text{and}\qquad
\PP_{\bp}(\hat r^{(B)})\ge (\alpha-\tau)/m.
\]
In $(u,v)$-coordinates, each constraint is a superlevel set of the concave function
$(u,v)\mapsto \log\PP_{\bp(u,v)}(\cdot)$, hence convex and bounded. Let $\cW\subset\R^2$ be any
compact convex set containing the intersection of these two superlevel sets. In practice, a
simple bounding box around that intersection suffices; the algorithm below only requires that
$\cW$ contain all potential robust witnesses.

\subsection{Cellwise lower bounds on multinomial probabilities}\label{sec:prob-bounds}

We partition $\cW$ into triangles. Let $T\subset\R^2$ be a triangle with vertices
$w^{(1)},w^{(2)},w^{(3)}\in\cW$, where $w^{(j)}=(u^{(j)},v^{(j)})$.

For any outcome $r$, define $f_r(u,v):=\log\PP_{\bp(u,v)}(r)$. From~\eqref{eq:logprob-3},
$f_r$ is concave in $(u,v)$ because it is affine minus $n\log(1+e^u+e^v)$ (and $\log(1+e^u+e^v)$
is convex). Concavity implies a simple, computable lower bound on $\PP_{\bp(u,v)}(r)$ over $T$.

\begin{lemma}[Vertex minimum for concave log-probabilities]\label{lem:vertex-min}
For any outcome $r$ and any triangle $T$ with vertices $\{w^{(j)}\}_{j=1}^3$,
\[
\min_{(u,v)\in T}\ f_r(u,v)\ =\ \min_{j\in\{1,2,3\}} f_r(w^{(j)}).
\]
Consequently, the quantity
\[
\underline{P}_T(r)\ :=\ \min_{j\in\{1,2,3\}} \PP_{\bp(w^{(j)})}(r)
\]
satisfies $\PP_{\bp(u,v)}(r)\ge \underline{P}_T(r)$ for all $(u,v)\in T$.
\end{lemma}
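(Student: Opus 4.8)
The plan is to combine the concavity of $f_r$ with the elementary fact that a concave function on a polytope attains its minimum at a vertex. First I would record concavity explicitly: by~\eqref{eq:logprob-3} we have $f_r(u,v)=\log\kappa(r)+r_1u+r_2v-n\log(1+e^u+e^v)$, which is an affine function of $(u,v)$ minus $n$ times the log-sum-exp term $\log(1+e^u+e^v)$. Since log-sum-exp is convex (in the softmax parametrization its Hessian is the covariance matrix of the category indicators, hence positive semidefinite), the term $-n\log(1+e^u+e^v)$ is concave, and adding an affine function preserves concavity. Thus $f_r$ is concave on all of $\R^2$, in particular on $T$.

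Next I would establish the vertex characterization by Jensen's inequality. Any point $(u,v)\in T$ admits barycentric coordinates $(\lambda_1,\lambda_2,\lambda_3)$ with $\lambda_j\ge 0$, $\sum_j\lambda_j=1$, and $(u,v)=\sum_j\lambda_j w^{(j)}$. Concavity then gives
\[
f_r(u,v)\ \ge\ \sum_{j=1}^3 \lambda_j\, f_r(w^{(j)})\ \ge\ \min_{j\in\{1,2,3\}} f_r(w^{(j)}),
\]
so $\min_{(u,v)\in T} f_r\ \ge\ \min_j f_r(w^{(j)})$. The reverse inequality is immediate because each vertex lies in $T$, whence the minimum over $T$ is at most the smallest vertex value. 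Combining the two yields the claimed equality $\min_{(u,v)\in T} f_r(u,v)=\min_{j} f_r(w^{(j)})$.

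Finally, for the probability statement I would transport the inequality through the exponential. Since $x\mapsto e^x$ is strictly increasing it commutes with $\min$, so for every $(u,v)\in T$,
\[
\PP_{\bp(u,v)}(r)=e^{f_r(u,v)}\ \ge\ e^{\min_j f_r(w^{(j)})}=\min_j e^{f_r(w^{(j)})}=\min_j \PP_{\bp(w^{(j)})}(r)=\underline{P}_T(r),
\]
which is exactly the asserted consequence.

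There is no genuine obstacle here: the argument is a direct application of concavity and Jensen's inequality. The only point meriting care is the sign convention — it is the \emph{minimum} (not the maximum) of a concave function that is controlled by its vertex values, so concavity must be invoked in the correct direction. The single external fact used is convexity of the log-sum-exp function; it is standard, but if a self-contained treatment is preferred one can verify positive semidefiniteness of its Hessian by a short direct computation.
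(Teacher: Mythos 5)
Your proof is correct and follows essentially the same route as the paper's: concavity of $f_r$ (affine terms minus $n$ times the convex log-sum-exp function), Jensen's inequality applied to the barycentric representation of points in $T$ to get the vertex lower bound, the trivial reverse inequality since the vertices lie in $T$, and monotonicity of the exponential to transfer the bound to probabilities. The only cosmetic difference is your optional Hessian-as-covariance remark justifying convexity of log-sum-exp, which the paper simply cites as standard.
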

\begin{proof}
Any $(u,v)\in T$ is a convex combination of the vertices. Concavity gives
$f_r(u,v)\ge \sum_j \lambda_j f_r(w^{(j)})\ge \min_j f_r(w^{(j)})$. Please refer to Appendix~\ref{app:vertex-min-proof} for details.
\end{proof}

\subsection{Certified p-value bounds on a triangle}\label{sec:pval-bounds}

Fix an observed outcome $\hat r$. For each $r$, define the linear function
\[
g_{r;\hat r}(u,v)
:=
(r_1-\hat r_1)u + (r_2-\hat r_2)v - \big(\log\kappa(\hat r)-\log\kappa(r)\big).
\]
By Lemma~\ref{lem:halfspace-3}, $r$ is in the p-value tail at $(u,v)$ iff $g_{r;\hat r}(u,v)\le 0$.
Since $g_{r;\hat r}$ is linear, membership can be certified over a triangle by checking vertices.

Define the \emph{definitely-in-tail} $\cI_T(\hat r)$ and \emph{definitely-out-of-tail} $\cO_T(\hat r)$ index sets for triangle $T$:
\[
\cI_T(\hat r):=\{r:\ g_{r;\hat r}(w^{(j)})\le 0\ \text{for all }j=1,2,3\},
\qquad
\cO_T(\hat r):=\{r:\ g_{r;\hat r}(w^{(j)})> 0\ \text{for all }j=1,2,3\}.
\]
All remaining outcomes are \emph{ambiguous} on $T$.

\begin{prop}[Certified p-value interval on a triangle]\label{prop:pval-interval}
Let $\hat r$ be fixed and let $T$ be a triangle. Define
\begin{equation}\label{eq:pval-LB-UB}
\underline{\rho}_T(\hat r)
:=
\sum_{r\in\cI_T(\hat r)} \underline{P}_T(r),
\qquad
\overline{\rho}_T(\hat r)
:=
1-\sum_{r\in\cO_T(\hat r)} \underline{P}_T(r),
\end{equation}
where $\underline{P}_T(r)$ is from Lemma~\ref{lem:vertex-min}.
Then for all $(u,v)\in T$,
\[
\underline{\rho}_T(\hat r)\ \le\ \rho_{\widehat{\bp}}(\bp(u,v))\ \le\ \overline{\rho}_T(\hat r).
\]
\end{prop}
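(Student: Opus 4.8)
The plan is to prove the two inequalities separately, in each case showing that $\underline{\rho}_T(\hat r)$ and $\overline{\rho}_T(\hat r)$ are valid bounds \emph{uniformly} over all $(u,v)\in T$, by exploiting the certified tail-membership of Proposition's index sets together with the cellwise probability lower bound of Lemma~\ref{lem:vertex-min}. Fix an arbitrary $(u,v)\in T$ and recall the definition $\rho_{\widehat{\bp}}(\bp(u,v)) = \sum_{r:\ g_{r;\hat r}(u,v)\le 0} \PP_{\bp(u,v)}(r)$, so the task reduces to bounding this data-dependent sum from above and below by quantities that do not depend on the particular point.

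First I would establish the \textbf{lower bound}. The key observation is that if $r\in\cI_T(\hat r)$, then $g_{r;\hat r}(w^{(j)})\le 0$ at all three vertices; since $g_{r;\hat r}$ is linear (affine) and $(u,v)$ is a convex combination of the vertices, we get $g_{r;\hat r}(u,v)\le 0$, so $r$ lies in the tail at $(u,v)$. Hence the tail set at $(u,v)$ contains $\cI_T(\hat r)$, and because every term $\PP_{\bp(u,v)}(r)$ is nonnegative,
\[
\rho_{\widehat{\bp}}(\bp(u,v))
\;\ge\;
\sum_{r\in\cI_T(\hat r)} \PP_{\bp(u,v)}(r)
\;\ge\;
\sum_{r\in\cI_T(\hat r)} \underline{P}_T(r)
\;=\;
\underline{\rho}_T(\hat r),
\]
where the second inequality is the pointwise bound $\PP_{\bp(u,v)}(r)\ge\underline{P}_T(r)$ from Lemma~\ref{lem:vertex-min}.

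Next I would establish the \textbf{upper bound} by a symmetric complementary argument. If $r\in\cO_T(\hat r)$, then $g_{r;\hat r}(w^{(j)})>0$ at all vertices, and by the same affine convex-combination argument $g_{r;\hat r}(u,v)>0$, so $r$ is \emph{excluded} from the tail at $(u,v)$. Thus the tail set at $(u,v)$ is contained in the complement of $\cO_T(\hat r)$, and using $\sum_{r}\PP_{\bp(u,v)}(r)=1$ (the multinomial probabilities sum to one over all count vectors with total $n$),
\[
\rho_{\widehat{\bp}}(\bp(u,v))
\;=\;
1 - \sum_{r:\ g_{r;\hat r}(u,v)>0}\PP_{\bp(u,v)}(r)
\;\le\;
1 - \sum_{r\in\cO_T(\hat r)}\PP_{\bp(u,v)}(r)
\;\le\;
1 - \sum_{r\in\cO_T(\hat r)}\underline{P}_T(r)
\;=\;
\overline{\rho}_T(\hat r),
\]
where I again invoke Lemma~\ref{lem:vertex-min} for the final step. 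Since $(u,v)\in T$ was arbitrary, both bounds hold uniformly over $T$, completing the proof.

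The argument is conceptually clean, and I expect no serious obstacle: the two halves are structurally dual, each combining one linearity fact (membership certified at vertices extends to the whole triangle because $g_{r;\hat r}$ is affine) with one concavity fact (the vertex-minimum probability lower bound). The one subtlety worth stating carefully is the direction of the inequality on the \emph{ambiguous} outcomes—those in neither $\cI_T$ nor $\cO_T$. These are harmlessly dropped from the lower bound (their nonnegative contribution is only omitted) and harmlessly retained in the upper bound (they are implicitly counted in the ``$1-$'' slack), so the interval is valid precisely because ambiguous terms are attributed to the conservative side of each bound. I would make sure to note that the bounds need not be tight, but they are always \emph{sound}, which is exactly what the certified decision procedure requires.
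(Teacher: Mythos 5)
Your proof is correct and follows essentially the same route as the paper's: certify tail membership (resp.\ exclusion) over all of $T$ from the vertex signs of the affine function $g_{r;\hat r}$ via Lemma~\ref{lem:halfspace-3}, then apply the concavity-based vertex-minimum bound of Lemma~\ref{lem:vertex-min} termwise, with the upper bound obtained by subtracting the certified-out mass from $\sum_r \PP_{\bp(u,v)}(r)=1$. Your closing observation that ambiguous outcomes are attributed to the conservative side of each bound matches the remark the paper makes after its detailed proof, so nothing is missing.
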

\begin{proof}
If $r\in\cI_T(\hat r)$ then $g_{r;\hat r}\le 0$ at all vertices, hence everywhere on $T$, so $r$
belongs to the tail for all $(u,v)\in T$. Therefore the p-value tail sum includes all
$r\in\cI_T(\hat r)$, and Lemma~\ref{lem:vertex-min} lower-bounds each probability by
$\underline{P}_T(r)$, yielding the lower bound.

If $r\in\cO_T(\hat r)$ then $g_{r;\hat r}>0$ at all vertices, hence everywhere on $T$, so $r$ is
excluded from the tail for all $(u,v)\in T$. Thus the p-value is at most the total probability
mass of outcomes not in $\cO_T(\hat r)$. Using that $\PP_{\bp(u,v)}(r)\ge \underline{P}_T(r)$ for
each $r\in\cO_T(\hat r)$ and $\sum_r \PP_{\bp(u,v)}(r)=1$ yields the upper bound in~\eqref{eq:pval-LB-UB}.
\end{proof}

\subsection{Triangle refinement and the certified intersection test}\label{sec:algo-k3}

For AB testing we require both p-values to exceed $\alpha$. On a triangle $T$, define
\[
\underline{\rho}_{A,T}:=\underline{\rho}_T(\hat r^{(A)}),\quad
\overline{\rho}_{A,T}:=\overline{\rho}_T(\hat r^{(A)}),
\qquad
\underline{\rho}_{B,T}:=\underline{\rho}_T(\hat r^{(B)}),\quad
\overline{\rho}_{B,T}:=\overline{\rho}_T(\hat r^{(B)}),
\]
and the corresponding bounds for the minimum:
\[
\underline{M}_T:=\min\{\underline{\rho}_{A,T},\underline{\rho}_{B,T}\},
\qquad
\overline{M}_T:=\min\{\overline{\rho}_{A,T},\overline{\rho}_{B,T}\}.
\]
Then:
(i) if $\underline{M}_T\ge \alpha+\tau$, $T$ certifies robust intersection;
(ii) if $\overline{M}_T<\alpha-\tau$, $T$ can be discarded (no robust witness in $T$);
(iii) otherwise, $T$ is refined unless it is already sufficiently small.

We measure triangle size by its diameter in $(u,v)$:
\[
\diam(T):=\max_{i,j\in\{1,2,3\}}\|w^{(i)}-w^{(j)}\|_2.
\]
We refine by longest-edge bisection (any shape-regular refinement works). The discussion so far enable us to design the certified MVC intersection test Algorithm~\ref{alg:k3} for $k = 3$.

\begin{algorithm}[t]
\caption{Certified MVC intersection test for $k=3$}\label{alg:k3}
\begin{algorithmic}[1]
\Require Observed outcomes $\hat r^{(A)},\hat r^{(B)}\in\N^3$ with sum $n$, level $\alpha\in(0,1)$,
decision margin $\tau>0$, refinement tolerance $\varepsilon>0$, compact search domain $\cW\subset\R^2$.
\Ensure \textsf{INTERSECT}, \textsf{DISJOINT}, or \textsf{UNCERTAIN}.
\State Initialize a triangulation $\cT$ of $\cW$ (e.g.\ split a bounding box into two triangles).
\State Initialize a queue $\cQ\gets \cT$.
\While{$\cQ$ is not empty}
    \State Pop a triangle $T$ from $\cQ$.
    \State Compute $(\underline{\rho}_{A,T},\overline{\rho}_{A,T})$ and $(\underline{\rho}_{B,T},\overline{\rho}_{B,T})$ via Proposition~\ref{prop:pval-interval}.
    \State Set $\underline{M}_T\gets \min\{\underline{\rho}_{A,T},\underline{\rho}_{B,T}\}$ and $\overline{M}_T\gets \min\{\overline{\rho}_{A,T},\overline{\rho}_{B,T}\}$.
    \If{$\underline{M}_T \ge \alpha+\tau$}
        \State \Return \textsf{INTERSECT}.
    \ElsIf{$\overline{M}_T < \alpha-\tau$}
        \State \textbf{continue prune $T$} 
    \ElsIf{$\diam(T)\le \varepsilon$}
        \State Mark $T$ as unresolved.
    \Else
        \State Bisect $T$ into two triangles $T_1,T_2$ (e.g.\ longest-edge bisection).
        \State Push $T_1,T_2$ onto $\cQ$.
    \EndIf
\EndWhile
\If{no unresolved triangles exist}
    \State \Return \textsf{DISJOINT}.
\Else
    \State \Return \textsf{UNCERTAIN}.
\EndIf
\end{algorithmic}
\end{algorithm}

\subsection{Correctness guarantee}\label{sec:correctness-k3}


\begin{thm}[Soundness of Algorithm~\ref{alg:k3}]\label{thm:soundness-k3}
Let $m=|\Delta_{3,n}|$ and define
\[
S_A := \{(u,v)\in\R^2 : \PP_{\bp(u,v)}(\hat r^{(A)})\ge (\alpha-\tau)/m\},\quad
S_B := \{(u,v)\in\R^2 : \PP_{\bp(u,v)}(\hat r^{(B)})\ge (\alpha-\tau)/m\}.
\]
Assume the compact search domain $\cW$ satisfies $\cW \supseteq S_A\cap S_B$.
Then Algorithm~\ref{alg:k3} is sound in the following sense:
\begin{itemize}
\item If it returns \textsf{INTERSECT}, then the MVCs robustly intersect (Definition~\ref{def:robust-decision}).
\item If it returns \textsf{DISJOINT}, then the MVCs are robustly disjoint (Definition~\ref{def:robust-decision}).
\end{itemize}
If it returns \textsf{UNCERTAIN}, then the true decision is within a $\tau$-neighborhood of the
threshold on at least one triangle of diameter at most $\varepsilon$.
\end{thm}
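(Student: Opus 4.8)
The plan is to reduce every branch of the algorithm to a single sandwiching inequality that transfers the cellwise certified bounds of Proposition~\ref{prop:pval-interval} to the true objective
\[
M^\star(u,v)\ :=\ \min\{\rho_A(\bp(u,v)),\,\rho_B(\bp(u,v))\}.
\]
First I would fix a triangle $T$ and invoke Proposition~\ref{prop:pval-interval} twice, giving $\underline{\rho}_{A,T}\le\rho_A(\bp(u,v))\le\overline{\rho}_{A,T}$ and $\underline{\rho}_{B,T}\le\rho_B(\bp(u,v))\le\overline{\rho}_{B,T}$ for every $(u,v)\in T$. Because $(a,b)\mapsto\min\{a,b\}$ is nondecreasing in each coordinate, these four inequalities collapse into
\[
\underline{M}_T\ \le\ M^\star(u,v)\ \le\ \overline{M}_T\qquad\text{for all }(u,v)\in T .
\]
This sandwich is the only place the p-value machinery enters; the rest is geometric bookkeeping over the refinement tree.

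For the \textsf{INTERSECT} branch I would argue directly. The algorithm returns \textsf{INTERSECT} only when some popped $T$ passes the test $\underline{M}_T\ge\alpha+\tau$, so by the sandwich every point of $T$ satisfies $M^\star\ge\alpha+\tau$. Evaluating at any vertex $w^{(1)}\in T$ produces an explicit $\bp=\bp(w^{(1)})\in\Delta_3^\circ$ with $\rho_A(\bp)\ge\alpha+\tau$ \emph{and} $\rho_B(\bp)\ge\alpha+\tau$, which is precisely a robust-intersection witness in the sense of Definition~\ref{def:robust-decision}.

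The \textsf{DISJOINT} branch is the main obstacle and needs two ingredients. The first is a covering claim: since the initial triangulation covers $\cW$ and each bisection replaces a parent by two children whose union is the parent, the leaves of the refinement tree always tile $\cW$; when the queue empties with no unresolved cell and without having returned \textsf{INTERSECT}, every leaf must have been pruned via $\overline{M}_T<\alpha-\tau$, on a family of triangles covering $\cW$. By the sandwich this yields $M^\star(u,v)<\alpha-\tau$ for all $(u,v)\in\cW$. The second is an exterior bound: for $(u,v)\notin\cW$ the hypothesis $\cW\supseteq S_A\cap S_B$ forces $(u,v)\notin S_A$ or $(u,v)\notin S_B$, whence Lemma~\ref{lem:mpx-bound} gives $\rho_A(\bp(u,v))\le m\,\PP_{\bp(u,v)}(\hat r^{(A)})<\alpha-\tau$ (resp.\ for $B$), so $M^\star(u,v)<\alpha-\tau$ there as well. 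Combining the two certifies $M^\star<\alpha-\tau$ on all of $\R^2$, i.e.\ robust disjointness over the open simplex; boundary parameters are absorbed by the face reduction of Section~\ref{sec:boundary}. The delicate points are that the prune test is exactly what guarantees the ``covered by triangles with small upper bound'' property, and that the factor-$m$ slack built into the definition of $S_A,S_B$ (hence $\cW$) is precisely what aligns the exterior bound with the $\alpha-\tau$ threshold.

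Finally, for \textsf{UNCERTAIN} I would unpack the only way a triangle is marked unresolved: it passed neither the intersection test nor the prune test yet already has $\diam(T)\le\varepsilon$, so simultaneously $\underline{M}_T<\alpha+\tau$ and $\overline{M}_T\ge\alpha-\tau$. Together with $\underline{M}_T\le\overline{M}_T$ and the sandwich, the certified interval $[\underline{M}_T,\overline{M}_T]$ for $M^\star$ on $T$ meets the band $[\alpha-\tau,\alpha+\tau)$, exhibiting a cell of diameter at most $\varepsilon$ on which the decision is provably within $\tau$ of the threshold. I would close by noting that each of the three returns is reached after finitely many pops: longest-edge bisection drives $\diam(T)$ below $\varepsilon$ in finitely many steps, after which a cell is necessarily pruned, certified, or marked unresolved, so the loop terminates and the \textsf{DISJOINT} conclusion is well posed.
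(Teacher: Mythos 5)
Your proof is correct and follows essentially the same route as the paper's: the cellwise sandwich from Proposition~\ref{prop:pval-interval} via monotonicity of $\min$, the direct argument for \textsf{INTERSECT}, and the two-part \textsf{DISJOINT} argument combining the pruned covering of $\cW$ with the exterior bound obtained from Lemma~\ref{lem:mpx-bound} and $\cW\supseteq S_A\cap S_B$. Your extra details---the explicit covering invariant under bisection, the vertex witness, the unpacking of the \textsf{UNCERTAIN} branch, the termination remark, and the note that boundary parameters are handled by the face decomposition of Section~\ref{sec:boundary}---only make explicit (and in the boundary case, slightly more careful) what the paper's proof leaves implicit.
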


\begin{proof}
On any triangle $T$, Proposition~\ref{prop:pval-interval} bounds each p-value over all points in $T$,
hence $\underline{M}_T\le \min\{\rho_A,\rho_B\}\le \overline{M}_T$ on $T$.

If $\underline{M}_T\ge \alpha+\tau$ for some processed $T$, then for all $\bp\in T$,
$\min\{\rho_A(\bp),\rho_B(\bp)\}\ge \alpha+\tau$, so a robust witness exists and the MVCs robustly intersect. If the queue empties with no unresolved triangles, then for every triangle in the (refined) triangulation of $\cW$ we have $\overline{M}_T<\alpha-\tau$, hence $\min\{\rho_A(\bp),\rho_B(\bp)\}<\alpha-\tau$ for all $\bp\in\cW$.
Now consider any $\bp\in\Delta_3$ with coordinates $(u,v)\notin \cW$. Since $\cW \supseteq S_A\cap S_B$, we have $(u,v)\notin S_A$ or $(u,v)\notin S_B$.
If $(u,v)\notin S_A$, then $\PP_{\bp}(\hat r^{(A)})<(\alpha-\tau)/m$, and by Lemma~\ref{lem:mpx-bound},
$\rho_A(\bp)<\alpha-\tau$. Similarly for $B$. In either case, $\min\{\rho_A(\bp),\rho_B(\bp)\}\le \alpha-\tau$ holds for all $\bp\notin\cW$. Combining with the bound on $\cW$ proves robust disjointness over all $\Delta_3$.
\end{proof}

\subsection{Computational complexity}\label{sec:remarks-k3}

The naive per-triangle cost is $O(m)$ outcomes times a small constant, where
$m=|\Delta_{3,n}|=\binom{n+2}{2}=O(n^2)$. In practice, substantial speedups are obtained by:
(i) precomputing $\log\kappa(r)$ for all $r$ once; (ii) caching $\PP_{\bp(w^{(j)})}(r)$ across
triangles that share vertices; and (iii) early pruning using $\overline{M}_T<\alpha-\tau$.

\section{Extension to general $k$}\label{sec:general-k}

We now generalize the construction to arbitrary $k$, highlighting what changes and what remains
identical.

\subsection{Log-odds coordinates in dimension $k-1$}\label{sec:logodds-k}

For $\bp\in\Delta_k^\circ$, define $u\in\R^{k-1}$ by
\[
u_i=\log\frac{p_i}{p_k},\qquad i=1,\ldots,k-1,
\]
with inverse map
\[
p_i(u)=\frac{e^{u_i}}{1+\sum_{j=1}^{k-1}e^{u_j}},\ i\le k-1,
\qquad
p_k(u)=\frac{1}{1+\sum_{j=1}^{k-1}e^{u_j}}.
\]
For any outcome $r=(r_1,\ldots,r_k)$ with $\sum_i r_i=n$, let
$\kappa(r)=\frac{n!}{\prod_{i=1}^k r_i!}$. Then
\[
\log \PP_{\bp(u)}(r)
=
\log\kappa(r)\;+\;\sum_{i=1}^{k-1} r_i u_i\;-\;n\log\!\Big(1+\sum_{j=1}^{k-1}e^{u_j}\Big).
\]

\subsection{Halfspace structure and simplex cells}\label{sec:halfspaces-k}

Fix an observed $\hat r$. Exactly as in Lemma~\ref{lem:halfspace-3}, for any $r$,
\[
\PP_{\bp(u)}(r)\le \PP_{\bp(u)}(\hat r)
\quad\Longleftrightarrow\quad
\sum_{i=1}^{k-1}(r_i-\hat r_i)u_i \;\le\; \log\kappa(\hat r)-\log\kappa(r),
\]
which is a halfspace in $\R^{k-1}$.

To keep vertex checks inexpensive in higher dimension, we recommend partitioning the compact
search region $\cW\subset\R^{k-1}$ into \emph{simplices}. A simplex in dimension $d=k-1$ has only
$d+1=k$ vertices (as opposed to $2^d$ vertices for a hypercube).

\subsection{Certified p-value bounds on a simplex cell}\label{sec:bounds-k}

Let $S\subset\R^{k-1}$ be a simplex with vertices $\{w^{(j)}\}_{j=1}^{k}$. Define the linear
functions
\[
g_{r;\hat r}(u)
:=
\sum_{i=1}^{k-1}(r_i-\hat r_i)u_i - \big(\log\kappa(\hat r)-\log\kappa(r)\big).
\]
As before, define
\[
\cI_S(\hat r):=\{r:\ g_{r;\hat r}(w^{(j)})\le 0\ \text{for all }j\},
\qquad
\cO_S(\hat r):=\{r:\ g_{r;\hat r}(w^{(j)})> 0\ \text{for all }j\}.
\]
The function $u\mapsto \log\PP_{\bp(u)}(r)$ is concave in $\R^{k-1}$, so the vertex-minimum
lower bound generalizes directly:
\[
\underline{P}_S(r):=\min_{j\in\{1,\ldots,k\}}\PP_{\bp(w^{(j)})}(r)
\quad\Rightarrow\quad
\PP_{\bp(u)}(r)\ge \underline{P}_S(r)\ \ \forall u\in S.
\]
Hence the p-value bounds in Proposition~\ref{prop:pval-interval} hold verbatim with $T$ replaced
by $S$:
\[
\underline{\rho}_S(\hat r)=\sum_{r\in\cI_S(\hat r)}\underline{P}_S(r),
\qquad
\overline{\rho}_S(\hat r)=1-\sum_{r\in\cO_S(\hat r)}\underline{P}_S(r).
\]

\subsection{Algorithm and scaling discussion}\label{sec:scaling-k}

Algorithm~\ref{alg:k3} generalizes by replacing triangles with $(k-1)$-simplices and using a
shape-regular refinement rule (e.g.\ newest-vertex bisection). The \textsf{INTERSECT}/\textsf{DISJOINT}
soundness remains unchanged.

The main difference for $k>3$ is computational scaling:
$m=|\Delta_{k,n}|=\binom{n+k-1}{k-1}$ grows as $n^{k-1}$ for fixed $k$, and adaptive refinement may
require many cells in dimension $k-1$ in worst cases. Nonetheless, for modest $k$ (as is common in
categorical AB testing), the method remains practical because
(i) simplex cells use only $k$ vertices;
(ii) vertex-probability evaluations can be cached aggressively; and
(iii) many cells prune early via $\overline{M}_S<\alpha-\tau$.

\subsection{Handling zero-count categories via face decomposition}
\label{sec:boundary}

The development so far implicitly assumes $\bp\in\Delta_k^\circ$, i.e., $p_i>0$ for all
$i$, so that log-odds coordinates are finite. In practice, observed multinomial outcomes
often contain zero counts, and any correct intersection test must account for boundary
faces of the simplex. We show that this can be handled cleanly by a finite decomposition
over lower-dimensional faces.

Let $\hat r^{(A)},\hat r^{(B)}\in\N^k$ be the observed outcomes. Define the index sets
\[
S_A := \{i:\hat r^{(A)}_i>0\},\qquad
S_B := \{i:\hat r^{(B)}_i>0\},\qquad
S_0 := S_A\cup S_B.
\]

\begin{lemma}[Zero-count exclusion]
\label{lem:zero-exclusion}
If $\hat r_i>0$ for some outcome $\hat r$ and $p_i=0$, then
$\rho_{\hat r}(\bp)=0$.
\end{lemma}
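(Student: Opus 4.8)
The plan is to show directly that the observed outcome $\hat r$ has probability zero under $\bp$, and then argue that the entire $p$-value tail sum collapses to a sum of zeros. First I would write out the multinomial probability of $\hat r$ under $\bp$, namely $\PP_{\bp}(\hat r) = \kappa(\hat r)\prod_{j=1}^k p_j^{\hat r_j}$. Because $p_i = 0$ and $\hat r_i \ge 1$ by hypothesis, the factor $p_i^{\hat r_i} = 0$, so the whole product vanishes and $\PP_{\bp}(\hat r) = 0$. The only convention worth flagging is $0^0 = 1$ for any other coordinate $j$ with $p_j = 0$ but $\hat r_j = 0$; this is harmless, since the single factor $p_i^{\hat r_i}$ already forces the product to zero regardless of the other terms.

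Next I would substitute this into the definition of the $p$-value. The tail set is $\{\hat q \in \Delta_{k,n} : \PP_{\bp}(\hat q) \le \PP_{\bp}(\hat r)\}$, which, using $\PP_{\bp}(\hat r) = 0$, becomes $\{\hat q : \PP_{\bp}(\hat q) \le 0\}$. Since every multinomial probability is non-negative, this condition is equivalent to $\PP_{\bp}(\hat q) = 0$, so the tail consists precisely of the zero-probability outcomes under $\bp$.

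Finally, I would conclude that $\rho_{\hat r}(\bp) = \sum_{\hat q : \PP_{\bp}(\hat q) = 0} \PP_{\bp}(\hat q) = 0$, since each summand is identically zero. There is no genuine obstacle here: the argument is a one-line consequence of the non-negativity of probabilities together with the vanishing of $\PP_{\bp}(\hat r)$. The only point requiring a moment of care is confirming that the weak inequality $\le$ in the tail definition does include the boundary case $\PP_{\bp}(\hat q) = 0$ (rather than excluding it), which it does; hence the collapse of the sum to zero is unambiguous.
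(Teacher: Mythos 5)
Your proposal is correct and follows essentially the same argument as the paper's proof: the factor $p_i^{\hat r_i}=0$ forces $\PP_{\bp}(\hat r)=0$, so the $p$-value tail contains only zero-probability outcomes and the sum vanishes. Your version simply spells out the multinomial product and the $0^0=1$ convention more explicitly, which the paper leaves implicit.
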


\begin{proof}
If $p_i=0$ and $\hat r_i>0$, then $\PP_{\bp}(\hat r)=0$. The p-value tail
$\{r:\PP_{\bp}(r)\le \PP_{\bp}(\hat r)\}$ contains only outcomes of probability zero,
hence the p-value sum equals zero.
\end{proof}

Lemma~\ref{lem:zero-exclusion} implies that any parameter $\bp$ satisfying
$\rho_A(\bp)\ge\alpha$ and $\rho_B(\bp)\ge\alpha$ must satisfy $p_i>0$ for all $i\in S_0$.
Consequently, only categories with zero count in \emph{both} outcomes may be set to zero in a potential intersection witness. We provide an example of such case in Appendix~\ref{app:example-zero-count}.

Let
\[
Z := \{1,\ldots,k\}\setminus S_0
\]
denote the set of jointly-zero categories. For any subset $T\subseteq Z$, define the face
\[
\Delta_k^{(T)} := \{\bp\in\Delta_k:\ p_i=0\ \forall i\in T\}.
\]
On this face, the multinomial distribution reduces to a $(k-|T|)$-category multinomial
over the remaining coordinates, with the same sample size $n$. The corresponding reduced
outcomes are obtained by deleting the coordinates in $T$ from $\hat r^{(A)}$ and
$\hat r^{(B)}$.

Therefore, deciding whether the MVCs intersect reduces to checking robust intersection on
each face $\Delta_k^{(T)}$, $T\subseteq Z$, in the reduced dimension. We provide an example of such case in Appendix~\ref{app:example-collapsing}. A general algorithm that extends to $k$-dimension with zero-count handling can be find in Algorithm~\ref{alg:faces}.

\section{Summary}
This paper studied a fundamental computational problem associated with minimum-volume confidence sets for the multinomial parameter: deciding whether the confidence sets corresponding to two observed outcomes intersect. Rather than attempting to compute or visualize MVCs directly, we developed a certified decision procedure that exploits the geometric structure induced by likelihood ordering in log-odds coordinates.

For the case of three categories, we presented an adaptive partitioning algorithm that computes rigorous lower and upper bounds on the exact $p$-value over each cell, yielding provably sound certificates of intersection or disjointness up to a user-specified tolerance. The method naturally accommodates the discontinuity and nonconvexity of MVCs by reasoning over regions on which the ordering of likelihoods is fixed. We further showed how the approach extends to higher dimensions and how boundary cases arising from zero-count categories can be handled cleanly by decomposing the simplex into a finite collection of lower-dimensional faces.

\bibliography{main}
\bibliographystyle{IEEEtran}

\section*{Appendix}

\subsection{Detailed Proof of Lemma~\ref{lem:mpx-bound}}
\noindent \textbf{Restatement of Lemma~\ref{lem:mpx-bound}}.
\label{lem:mpx-bound-detailed}
Let $k\ge 2$ and $n\ge 1$, and let $\Delta_{k,n}$ denote the set of all empirical
distributions (equivalently, count vectors divided by $n$) arising from $n$ i.i.d.\
samples over $k$ categories. Let $m:=|\Delta_{k,n}|=\binom{n+k-1}{k-1}$. Fix any observed
outcome $\hat r$ (or equivalently $\widehat{\bp}=\hat r/n\in\Delta_{k,n}$). Then for every
multinomial parameter $\bp\in\Delta_k$,
\[
\rho_{\widehat{\bp}}(\bp)\ \le\ m\,\PP_{\bp}(\hat r).
\]
Equivalently, writing $\rho_{\hat r}(\bp)$ for the same p-value,
\[
\rho_{\hat r}(\bp)\ \le\ m\,\PP_{\bp}(\hat r).
\]

\begin{proof}
We begin by unpacking the definition of the (exact) p-value used throughout the paper.
Fix $\hat r$ and $\bp\in\Delta_k$. Recall that the p-value associated with $\hat r$ under
the null hypothesis $\bp$ is
\begin{equation}
\label{eq:pval-def-detailed}
\rho_{\hat r}(\bp)
\ :=\
\sum_{r \in \mathcal{T}_{\hat r}(\bp)} \PP_{\bp}(r),
\qquad
\text{where}\quad
\mathcal{T}_{\hat r}(\bp)
:=
\Big\{
r\in\Delta_{k,n}:\ \PP_{\bp}(r)\le \PP_{\bp}(\hat r)
\Big\}.
\end{equation}
Here $\Delta_{k,n}$ indexes all possible empirical outcomes from $n$ draws (equivalently,
all count vectors $r=(r_1,\ldots,r_k)\in\N^k$ with $\sum_i r_i=n$), and $\PP_{\bp}(r)$ is
the multinomial probability of observing that outcome under $\bp$.

The proof is a direct counting-and-bounding argument.

\medskip
\noindent We first show that the tail set contains at most $m$ outcomes.
By definition, $\mathcal{T}_{\hat r}(\bp)\subseteq \Delta_{k,n}$. Since
$|\Delta_{k,n}|=m$, we have the trivial cardinality bound
\begin{equation}
\label{eq:tail-card}
|\mathcal{T}_{\hat r}(\bp)|\ \le\ |\Delta_{k,n}|\ =\ m.
\end{equation}

\medskip
\noindent Then we show that each summand in the p-value is bounded by $\PP_{\bp}(\hat r)$.
For any $r\in\mathcal{T}_{\hat r}(\bp)$, the defining property of $\mathcal{T}_{\hat r}(\bp)$
gives
\begin{equation}
\label{eq:term-bound}
\PP_{\bp}(r)\ \le\ \PP_{\bp}(\hat r).
\end{equation}
This holds for every $r$ included in the p-value sum~\eqref{eq:pval-def-detailed}.

\medskip
\noindent No we bound the total tail probability mass.
Using~\eqref{eq:pval-def-detailed},~\eqref{eq:term-bound}, and the fact that there are
$|\mathcal{T}_{\hat r}(\bp)|$ terms in the sum, we obtain
\begin{align}
\rho_{\hat r}(\bp)
\ &=\
\sum_{r\in\mathcal{T}_{\hat r}(\bp)} \PP_{\bp}(r)
\ \le\
\sum_{r\in\mathcal{T}_{\hat r}(\bp)} \PP_{\bp}(\hat r)
\ =\
|\mathcal{T}_{\hat r}(\bp)|\,\PP_{\bp}(\hat r).
\label{eq:pval-card-times}
\end{align}
Finally apply the cardinality bound~\eqref{eq:tail-card} to~\eqref{eq:pval-card-times}:
\[
\rho_{\hat r}(\bp)
\ \le\
m\,\PP_{\bp}(\hat r),
\]
which is exactly the desired inequality.
\end{proof}

\subsection{Geometric interpretation of the superlevel-set restriction}
\label{app:superlevel-geometry}

We now justify the geometric statement used following Lemma~\ref{lem:mpx-bound}, namely
that the constraint
\[
\PP_{\bp}(\hat r)\ \ge\ \frac{\alpha-\tau}{m}
\]
defines a convex and bounded region in log-odds coordinates.

\medskip
We start by expressing the constraint as a superlevel set.
Fix an observed outcome $\hat r$ and consider the function
\[
f(u,v)
\ :=\
\log \PP_{\bp(u,v)}(\hat r),
\]
where $(u,v)\in\mathbb{R}^2$ are the log-odds coordinates defined by
\[
p_1=\frac{e^u}{1+e^u+e^v},\qquad
p_2=\frac{e^v}{1+e^u+e^v},\qquad
p_3=\frac{1}{1+e^u+e^v}.
\]
The inequality
\[
\PP_{\bp(u,v)}(\hat r)\ \ge\ \frac{\alpha-\tau}{m}
\]
is equivalent, after taking logarithms on both sides, to
\begin{equation}
\label{eq:superlevel}
f(u,v)\ \ge\ \log(\alpha-\tau)-\log m.
\end{equation}
The set of points $(u,v)$ satisfying~\eqref{eq:superlevel} is therefore the
\emph{superlevel set} of the function $f$, i.e.,
\[
\mathcal{S}
\ :=\
\{(u,v)\in\mathbb{R}^2:\ f(u,v)\ge c\},
\qquad
c:=\log(\alpha-\tau)-\log m.
\]

\medskip
\noindent We now show the concavity of the log-likelihood function.
Recall from~\eqref{eq:logprob-3} in the main text that
\[
f(u,v)
=
\log\kappa(\hat r)
+\hat r_1 u+\hat r_2 v
-n\log(1+e^u+e^v).
\]
The first three terms are affine functions of $(u,v)$.
The final term $-\log(1+e^u+e^v)$ is the negative of a convex function, since
$\log(1+e^u+e^v)$ is convex on $\mathbb{R}^2$.
Therefore $f(u,v)$ is a concave function on $\mathbb{R}^2$.
A standard result from convex analysis states that if $f:\mathbb{R}^d\to\mathbb{R}$
is concave, then for any constant $c\in\mathbb{R}$, the superlevel set
\[
\{x:\ f(x)\ge c\}
\]
is a convex subset of $\mathbb{R}^d$.
Applying this result to $f(u,v)$ shows that the set $\mathcal{S}$ defined above
is convex.

\medskip
\noindent 
We now show that $\mathcal{S}$ is bounded.
As $\|(u,v)\|\to\infty$, at least one of $u$ or $v$ tends to $+\infty$ or $-\infty$.
In all such cases, the term $-n\log(1+e^u+e^v)$ dominates and tends to $-\infty$,
while the linear terms $\hat r_1 u+\hat r_2 v$ grow at most linearly.
Consequently,
\[
\lim_{\|(u,v)\|\to\infty} f(u,v) = -\infty.
\]
This implies that for any finite constant $c$, the inequality $f(u,v)\ge c$
can only hold within a bounded region of $\mathbb{R}^2$.
Hence the superlevel set $\mathcal{S}$ is bounded.

\medskip
Therefore, the constraint $\PP_{\bp}(\hat r)\ge(\alpha-\tau)/m$ defines, in $(u,v)$-coordinates,
a bounded and convex subset of $\mathbb{R}^2$.
Applying this argument separately to $\hat r^{(A)}$ and $\hat r^{(B)}$, we conclude
that the intersection of the two corresponding superlevel sets is also convex and
bounded. Any compact convex set $\mathcal{W}$ containing this intersection may
therefore be used as the search domain in Algorithm~\ref{alg:k3}.

\subsection{Detailed Proof of Lemma~\ref{lem:vertex-min}}
\label{app:vertex-min-proof}

\noindent \textbf{Restatement of Lemma~\ref{lem:vertex-min}}.
\label{lem:vertex-min-detailed}
Fix $k=3$ and $n\ge 1$. For any outcome $r=(r_1,r_2,r_3)\in\N^3$ with $r_1+r_2+r_3=n$, define
\[
f_r(u,v)\ :=\ \log \PP_{\bp(u,v)}(r),
\]
where $\bp(u,v)$ is given by the softmax map~\eqref{eq:softmax-3}. Let $T\subset\mathbb{R}^2$
be any (closed) triangle with vertices $w^{(1)},w^{(2)},w^{(3)}\in\mathbb{R}^2$. Then
\[
\min_{(u,v)\in T} f_r(u,v)\ =\ \min_{j\in\{1,2,3\}} f_r\!\left(w^{(j)}\right).
\]
Consequently, the quantity
\[
\underline{P}_T(r)\ :=\ \min_{j\in\{1,2,3\}} \PP_{\bp(w^{(j)})}(r)
\]
satisfies $\PP_{\bp(u,v)}(r)\ge \underline{P}_T(r)$ for all $(u,v)\in T$.

\begin{proof}
We proceed in three steps. First we verify concavity of $f_r$ on $\mathbb{R}^2$. Second we
use a general fact about concave functions on convex polytopes: the minimum over a polytope is attained at an extreme point (vertex). Third we specialize to triangles and conclude the probability lower bound.

\medskip
\noindent We first show that $f_r(u,v)$ is concave in $(u,v)$.
Recall the multinomial probability in log-odds coordinates (cf.~\eqref{eq:logprob-3}):
\begin{equation}
\label{eq:fr-form}
f_r(u,v)
=
\log \kappa(r)\;+\; r_1 u + r_2 v\;-\;n\log(1+e^u+e^v),
\end{equation}
where $\kappa(r)=\frac{n!}{r_1!\,r_2!\,r_3!}$ is constant with respect to $(u,v)$.
The first three terms on the right-hand side of~\eqref{eq:fr-form} are affine functions of
$(u,v)$, hence are both convex and concave. The remaining term
\[
(u,v)\ \mapsto\ \log(1+e^u+e^v)
\]
is convex on $\mathbb{R}^2$ (this is the standard ``log-sum-exp'' function). Multiplying a
convex function by $-n<0$ yields a concave function. Therefore the sum of the affine terms
and the concave term $-n\log(1+e^u+e^v)$ is concave, proving that $f_r$ is concave on
$\mathbb{R}^2$.

\medskip
\noindent Then we show that the minimum of a concave function over a triangle occurs at a vertex.
The triangle $T=\mathrm{conv}\{w^{(1)},w^{(2)},w^{(3)}\}$ is a compact convex set. We will show
that for any $(u,v)\in T$,
\begin{equation}
\label{eq:vertex-lower-bound}
f_r(u,v)\ \ge\ \min_{j\in\{1,2,3\}} f_r\!\left(w^{(j)}\right).
\end{equation}
Since $T$ is compact and $f_r$ is continuous (as a sum of continuous functions), $f_r$
attains its minimum on $T$ by the extreme value theorem. Inequality~\eqref{eq:vertex-lower-bound}
then implies that the minimum value cannot be smaller than the minimum over the vertices,
hence the minimum must be achieved at (at least) one vertex.

To establish~\eqref{eq:vertex-lower-bound}, let $(u,v)\in T$ be arbitrary. By definition of
convex hull, there exist coefficients $\lambda_1,\lambda_2,\lambda_3\ge 0$ with
$\lambda_1+\lambda_2+\lambda_3=1$ such that
\[
(u,v)\ =\ \lambda_1 w^{(1)} + \lambda_2 w^{(2)} + \lambda_3 w^{(3)}.
\]
Concavity of $f_r$ means precisely that for any such convex combination,
\begin{equation}
\label{eq:concavity}
f_r\!\left(\lambda_1 w^{(1)} + \lambda_2 w^{(2)} + \lambda_3 w^{(3)}\right)
\ \ge\
\lambda_1 f_r(w^{(1)}) + \lambda_2 f_r(w^{(2)}) + \lambda_3 f_r(w^{(3)}).
\end{equation}
Using the representation of $(u,v)$ and applying~\eqref{eq:concavity} gives
\[
f_r(u,v)
\ \ge\
\lambda_1 f_r(w^{(1)}) + \lambda_2 f_r(w^{(2)}) + \lambda_3 f_r(w^{(3)}).
\]
Finally, since $\lambda_1,\lambda_2,\lambda_3\ge 0$ and sum to $1$, the right-hand side is a
convex combination of the three numbers $\{f_r(w^{(1)}),f_r(w^{(2)}),f_r(w^{(3)})\}$ and
therefore cannot be smaller than their minimum. Formally,
\[
\lambda_1 f_r(w^{(1)}) + \lambda_2 f_r(w^{(2)}) + \lambda_3 f_r(w^{(3)})
\ \ge\
\min_{j\in\{1,2,3\}} f_r(w^{(j)}).
\]
Combining the two inequalities yields~\eqref{eq:vertex-lower-bound}.

\medskip
\noindent We want to now conclude the equality of minima.
Let $j^\star\in\arg\min_{j\in\{1,2,3\}} f_r(w^{(j)})$. Since $w^{(j^\star)}\in T$, we have
\[
\min_{(u,v)\in T} f_r(u,v)\ \le\ f_r\!\left(w^{(j^\star)}\right)\ =\ \min_{j} f_r(w^{(j)}).
\]
On the other hand, inequality~\eqref{eq:vertex-lower-bound} holds for all $(u,v)\in T$, so
taking the minimum over $(u,v)\in T$ on both sides gives
\[
\min_{(u,v)\in T} f_r(u,v)\ \ge\ \min_{j} f_r(w^{(j)}).
\]
Together these two inequalities imply
\[
\min_{(u,v)\in T} f_r(u,v)\ =\ \min_{j\in\{1,2,3\}} f_r(w^{(j)}),
\]
proving the first claim.

\medskip
\noindent Step 4: Translate the result to probabilities.
Exponentiation preserves order, hence from
$f_r(u,v)\ge \min_j f_r(w^{(j)})$ we obtain
\[
\PP_{\bp(u,v)}(r)\ =\ e^{f_r(u,v)}
\ \ge\
\exp\!\Big(\min_j f_r(w^{(j)})\Big)
\ =\
\min_j \exp(f_r(w^{(j)}))
\ =\
\min_{j\in\{1,2,3\}} \PP_{\bp(w^{(j)})}(r).
\]
This is exactly $\PP_{\bp(u,v)}(r)\ge \underline{P}_T(r)$ for all $(u,v)\in T$, completing the proof.
\end{proof}
\noindent Generalization to $k>3$: The same argument holds verbatim in dimension $d=k-1$ when $T$ is replaced by a simplex
$S\subset\mathbb{R}^{k-1}$ with vertices $\{w^{(j)}\}_{j=1}^{k}$, since every point in a
simplex is a convex combination of its vertices and $\log\PP_{\bp(u)}(r)$ remains concave in
$u$.
\subsection{Detailed Proof of Proposition~\ref{prop:pval-interval}}
\label{app:pval-interval-proof}

\noindent \textbf{Restatement of Proposition~\ref{prop:pval-interval}}.
\label{prop:pval-interval-detailed}
Fix an observed outcome $\hat r\in\N^3$ with $\hat r_1+\hat r_2+\hat r_3=n$ and let $\widehat{\bp}=\hat r/n\in\Delta_{3,n}$. Let $T\subset\mathbb{R}^2$ be a triangle with vertices $w^{(1)},w^{(2)},w^{(3)}$. For each outcome $r\in\N^3$ with sum $n$, define
\[
g_{r;\hat r}(u,v)
:=
(r_1-\hat r_1)u + (r_2-\hat r_2)v - \big(\log\kappa(\hat r)-\log\kappa(r)\big),
\]
and the index sets
\[
\mathcal{I}_T(\hat r)
:=
\{r:\ g_{r;\hat r}(w^{(j)})\le 0\ \text{for all }j=1,2,3\},
\qquad
\mathcal{O}_T(\hat r)
:=
\{r:\ g_{r;\hat r}(w^{(j)})> 0\ \text{for all }j=1,2,3\}.
\]
Let
\[
\underline{P}_T(r) := \min_{j\in\{1,2,3\}} \PP_{\bp(w^{(j)})}(r),
\]
and define
\[
\underline{\rho}_T(\hat r)
:=
\sum_{r\in\mathcal{I}_T(\hat r)} \underline{P}_T(r),
\qquad
\overline{\rho}_T(\hat r)
:=
1-\sum_{r\in\mathcal{O}_T(\hat r)} \underline{P}_T(r).
\]
Then for every $(u,v)\in T$,
\[
\underline{\rho}_T(\hat r)\ \le\ \rho_{\widehat{\bp}}(\bp(u,v))\ \le\ \overline{\rho}_T(\hat r),
\]
where
\[
\rho_{\widehat{\bp}}(\bp(u,v))
=
\sum_{r:\ \PP_{\bp(u,v)}(r)\le \PP_{\bp(u,v)}(\hat r)} \PP_{\bp(u,v)}(r)
\]
is the exact p-value defined in~\eqref{eqn:partial}.

\begin{proof}
Fix a triangle $T$ with vertices $w^{(1)},w^{(2)},w^{(3)}$. Throughout, let $(u,v)\in T$
be arbitrary.

\medskip
\noindent We start by characterizing the halfspace tail membership.
By Lemma~\ref{lem:halfspace-3}, for any outcome $r$ we have the equivalence
\begin{equation}
\label{eq:tail-equiv}
\PP_{\bp(u,v)}(r)\le \PP_{\bp(u,v)}(\hat r)
\quad\Longleftrightarrow\quad
g_{r;\hat r}(u,v)\le 0.
\end{equation}
Moreover, $g_{r;\hat r}(u,v)$ is an affine (in fact linear plus constant) function of
$(u,v)$.

\medskip
\noindent Then we can clarify the vertices for ``definitely-in-tail'' outcomes.
Consider any $r\in\mathcal{I}_T(\hat r)$. By definition, $g_{r;\hat r}(w^{(j)})\le 0$ for
all vertices $j=1,2,3$. Since $g_{r;\hat r}$ is affine and $T$ is the convex hull of its
vertices, every point $(u,v)\in T$ can be written as a convex combination
\[
(u,v)=\lambda_1 w^{(1)}+\lambda_2 w^{(2)}+\lambda_3 w^{(3)},
\qquad
\lambda_j\ge 0,\ \ \sum_{j=1}^3 \lambda_j=1.
\]
Affine functions preserve convex combinations, hence
\[
g_{r;\hat r}(u,v)
=
\lambda_1 g_{r;\hat r}(w^{(1)})+\lambda_2 g_{r;\hat r}(w^{(2)})+\lambda_3 g_{r;\hat r}(w^{(3)})
\ \le\ 0.
\]
Using~\eqref{eq:tail-equiv}, we conclude that for every $(u,v)\in T$,
\begin{equation}
\label{eq:def-in-tail}
r\in\mathcal{I}_T(\hat r)
\quad\Longrightarrow\quad
\PP_{\bp(u,v)}(r)\le \PP_{\bp(u,v)}(\hat r),
\end{equation}
i.e., $r$ belongs to the p-value tail for \emph{all} parameters in the triangle.

\medskip
\noindent Now consider the vertices for ``definitely-out-of-tail'' outcomes.
Now consider any $r\in\mathcal{O}_T(\hat r)$. By definition, $g_{r;\hat r}(w^{(j)})>0$
for all vertices. The same convex-combination argument implies that for every $(u,v)\in T$,
\[
g_{r;\hat r}(u,v)
=
\lambda_1 g_{r;\hat r}(w^{(1)})+\lambda_2 g_{r;\hat r}(w^{(2)})+\lambda_3 g_{r;\hat r}(w^{(3)})
\ >\ 0,
\]
and hence, by~\eqref{eq:tail-equiv},
\begin{equation}
\label{eq:def-out-tail}
r\in\mathcal{O}_T(\hat r)
\quad\Longrightarrow\quad
\PP_{\bp(u,v)}(r)>\PP_{\bp(u,v)}(\hat r)
\end{equation}
for all $(u,v)\in T$. In particular, such $r$ is excluded from the p-value tail
\emph{everywhere} on $T$.

\medskip
    \noindent We then derive a uniform lower bound on probabilities over $T$.
By Lemma~\ref{lem:vertex-min}, for each outcome $r$ and for all $(u,v)\in T$,
\begin{equation}
\label{eq:prob-lower}
\PP_{\bp(u,v)}(r)\ \ge\ \underline{P}_T(r)
\ :=\ \min_{j\in\{1,2,3\}}\PP_{\bp(w^{(j)})}(r).
\end{equation}

\medskip
\noindent Lower bound on the p-value as follow:
Let $\mathcal{T}(u,v)$ denote the p-value tail set at $(u,v)$:
\[
\mathcal{T}(u,v)
:=
\{r:\ \PP_{\bp(u,v)}(r)\le \PP_{\bp(u,v)}(\hat r)\}.
\]
By~\eqref{eq:def-in-tail}, every $r\in\mathcal{I}_T(\hat r)$ lies in $\mathcal{T}(u,v)$
for all $(u,v)\in T$, hence
\[
\rho_{\widehat{\bp}}(\bp(u,v))
=
\sum_{r\in\mathcal{T}(u,v)} \PP_{\bp(u,v)}(r)
\ \ge\
\sum_{r\in\mathcal{I}_T(\hat r)} \PP_{\bp(u,v)}(r).
\]
Applying the uniform probability lower bound~\eqref{eq:prob-lower} term-by-term yields
\[
\rho_{\widehat{\bp}}(\bp(u,v))
\ \ge\
\sum_{r\in\mathcal{I}_T(\hat r)} \underline{P}_T(r)
\ =\
\underline{\rho}_T(\hat r).
\]
Since $(u,v)\in T$ was arbitrary, this proves the lower bound uniformly over the triangle:
\[
\underline{\rho}_T(\hat r)\ \le\ \rho_{\widehat{\bp}}(\bp(u,v))
\qquad \forall (u,v)\in T.
\]

\medskip
\noindent Similarly, we can upper bound the p-value.
By~\eqref{eq:def-out-tail}, every $r\in\mathcal{O}_T(\hat r)$ is \emph{not} in the tail
$\mathcal{T}(u,v)$ for any $(u,v)\in T$. Therefore the p-value tail is contained in the
complement of $\mathcal{O}_T(\hat r)$:
\[
\mathcal{T}(u,v)\ \subseteq\ \Delta_{3,n}\setminus \mathcal{O}_T(\hat r).
\]
Hence
\begin{align}
\rho_{\widehat{\bp}}(\bp(u,v))
=
\sum_{r\in\mathcal{T}(u,v)} \PP_{\bp(u,v)}(r)
\ \le\
\sum_{r\in \Delta_{3,n}\setminus \mathcal{O}_T(\hat r)} \PP_{\bp(u,v)}(r)
=
1-\sum_{r\in\mathcal{O}_T(\hat r)} \PP_{\bp(u,v)}(r).
\label{eq:upper-start}
\end{align}
Applying the uniform lower bound~\eqref{eq:prob-lower} to each $r\in\mathcal{O}_T(\hat r)$
gives
\[
\sum_{r\in\mathcal{O}_T(\hat r)} \PP_{\bp(u,v)}(r)
\ \ge\
\sum_{r\in\mathcal{O}_T(\hat r)} \underline{P}_T(r),
\]
and substituting into~\eqref{eq:upper-start} yields
\[
\rho_{\widehat{\bp}}(\bp(u,v))
\ \le\
1-\sum_{r\in\mathcal{O}_T(\hat r)} \underline{P}_T(r)
\ =\
\overline{\rho}_T(\hat r).
\]
Again, since $(u,v)\in T$ was arbitrary, this holds uniformly for all points in the triangle.

\medskip
\noindent Combining the two bonds establishs that for all $(u,v)\in T$,
\[
\underline{\rho}_T(\hat r)\ \le\ \rho_{\widehat{\bp}}(\bp(u,v))\ \le\ \overline{\rho}_T(\hat r),
\]
completing the proof.
\end{proof}

Outcomes in $\mathcal{I}_T(\hat r)$ are guaranteed to contribute to the p-value tail everywhere on $T$, and $\underline{P}_T(r)$ lower-bounds their contribution uniformly, yielding $\underline{\rho}_T(\hat r)$. Outcomes in $\mathcal{O}_T(\hat r)$ are guaranteed to never contribute on $T$; by lower-bounding their probability mass we upper-bound the remaining tail probability by subtraction from $1$, yielding $\overline{\rho}_T(\hat r)$.
\subsection{Example of Zero-count exclusion.}
\label{app:example-zero-count}
Consider $k=3$ categories and $n=5$ samples. Suppose the observed outcome is $\hat r=(2,2,1)$, so that category~3 appears at least once. Now consider a parameter $\bp=(1/2,\,1/2,\,0)$ assigning zero probability to category~3. Then
\[
\PP_{\bp}(\hat r)
=\frac{5!}{2!\,2!\,1!}(1/2)^2(1/2)^2(0)^1
=0.
\]
Since $\PP_{\bp}(\hat r)=0$, the p-value tail consists only of outcomes of probability
zero, and hence $\rho_{\hat r}(\bp)=0$. In particular, $\bp$ cannot belong to the MVC at
any confidence level $\alpha>0$.

By contrast, if $\hat r=(2,3,0)$ and $\bp=(1/2,\,1/2,\,0)$, then $\PP_{\bp}(\hat r)>0$ and the p-value is not forced to vanish. This illustrates why only categories with zero count in \emph{both} outcomes may be set to zero when searching for intersection witnesses. When a category $i$ has zero count in both observed outcomes, setting $p_i=0$ does not invalidate either likelihood. In this case the multinomial model \emph{collapses} onto a lower-dimensional face of the simplex, corresponding to a reduced multinomial over the remaining categories. Importantly, this collapse is \textbf{EXACT} rather than approximate: the distribution of outcomes conditional on $p_i=0$ is precisely a multinomial with one fewer
category and the same sample size $n$.

From the perspective of the p-value, this means that restricting $\bp$ to a face $\Delta_k^{(T)}$ (for $T\subseteq Z$) induces the same p-value ordering as that obtained by deleting the zeroed coordinates from both the parameter and the observed outcome. As a
result, testing MVC intersection on a face $\Delta_k^{(T)}$ is equivalent to running the same certified intersection test on the reduced outcomes in dimension $k-|T|$.

Algorithm~\ref{alg:faces} exploits this structure by enumerating all faces corresponding to jointly-zero categories and applying the interior algorithm on each reduced problem. In this way, boundary behavior is handled exactly, without approximation or special-case logic, while preserving the same soundness guarantees as in the interior.

\subsection{Example of collapsing onto a lower-dimensional face of the simplex}

\label{app:example-collapsing}

Consider an AB testing problem with $k=4$ categories and sample size $n=5$. Suppose the
observed outcomes are
\[
\hat r^{(A)} = (3,2,0,0),
\qquad
\hat r^{(B)} = (4,1,0,0).
\]
Categories $1$ and $2$ appear at least once in both outcomes, while categories $3$ and $4$
are never observed.

Accordingly, we define
\[
S_0 = \{1,2\},
\qquad
Z = \{3,4\}.
\]

Suppose we consider a parameter $\bp$ with $p_1=0$. Since $\hat r^{(A)}_1>0$ and
$\hat r^{(B)}_1>0$, the multinomial likelihood of both observed outcomes under $\bp$ is zero.
By Lemma~\ref{lem:zero-exclusion}, this implies
\[
\rho_A(\bp)=\rho_B(\bp)=0.
\]
Hence $\bp$ cannot belong to either minimum-volume confidence set, and no intersection
witness can lie on any face with $p_1=0$. The same reasoning applies to $p_2=0$.

Thus, categories in $S_0$ must remain strictly positive in any candidate intersection point.

In contrast, categories $3$ and $4$ have zero count in \emph{both} outcomes. Setting
$p_3=0$ or $p_4=0$ does not force the likelihood of either observed outcome to vanish.
Consequently, the p-values $\rho_A(\bp)$ and $\rho_B(\bp)$ may remain strictly positive,
and such parameters cannot be ruled out a priori.

Each subset $T\subseteq Z$ corresponds to a face of the simplex on which
\[
p_i=0 \quad \forall i\in T.
\]
In this example, the relevant faces are:
\begin{itemize}
\item $T=\emptyset$: the full interior with $p_1,p_2,p_3,p_4>0$;
\item $T=\{3\}$: the face $p_3=0$;
\item $T=\{4\}$: the face $p_4=0$;
\item $T=\{3,4\}$: the face $p_3=p_4=0$, reducing the problem to a two-category (binomial)
model over categories $1$ and $2$.
\end{itemize}

Every parameter $\bp\in\Delta_4$ lies in the relative interior of exactly one of these faces. Algorithm~\ref{alg:faces} checks each face exhaustively using the certified intersection test in the corresponding reduced dimension.

\begin{algorithm}[t]
\caption{Certified MVC intersection test with zero-count handling}
\label{alg:faces}
\begin{algorithmic}[1]
\Require Observed outcomes $\hat r^{(A)},\hat r^{(B)}\in\N^k$, level $\alpha\in(0,1)$,
decision margin $\tau>0$, refinement tolerance $\varepsilon>0$.
\Ensure \textsf{INTERSECT}, \textsf{DISJOINT}, or \textsf{UNCERTAIN}.

\State Compute $S_0=\{i:\hat r^{(A)}_i>0\}\cup\{i:\hat r^{(B)}_i>0\}$ and
$Z=\{1,\ldots,k\}\setminus S_0$.

\For{each subset $T\subseteq Z$}
    \State Form reduced outcomes $\hat r^{(A)}_{-T},\hat r^{(B)}_{-T}$ by deleting
    coordinates in $T$.
    \State Set $k' \gets k-|T|$.
    \State Construct a compact search domain $\cW_T\subset\R^{k'-1}$.
    \State Run Algorithm~\ref{alg:k3} (or its $(k'-1)$-dimensional generalization)
    on $(\hat r^{(A)}_{-T},\hat r^{(B)}_{-T})$.
    \If{output is \textsf{INTERSECT}}
        \State \Return \textsf{INTERSECT}.
    \EndIf
\EndFor

\If{all face subproblems returned \textsf{DISJOINT}}
    \State \Return \textsf{DISJOINT}.
\Else
    \State \Return \textsf{UNCERTAIN}.
\EndIf
\end{algorithmic}
\end{algorithm}

\subsection{Bounding box construction for $W$ in $(u,v)$-coordinates}\label{sec:W-bounding-box}

We provide two ways to derive a bound for  $W$, one is closed-form and the other is analytical.
Throughout we start with $k=3$, with log-odds coordinates
\[
u=\log\frac{p_1}{p_3},\qquad v=\log\frac{p_2}{p_3},
\]
so that
\[
p_1(u,v)=\frac{e^u}{1+e^u+e^v},\quad
p_2(u,v)=\frac{e^v}{1+e^u+e^v},\quad
p_3(u,v)=\frac{1}{1+e^u+e^v}.
\]
For an observed count vector $\hat r=(\hat r_1,\hat r_2,\hat r_3)$ with $\sum_i \hat r_i=n$, define
\[
f_{\hat r}(u,v)\triangleq\log P_{p(u,v)}(\hat r)
= \log\kappa(\hat r) + \hat r_1 u + \hat r_2 v
- n\log\!\bigl(1+e^u+e^v\bigr),
\]
where $\kappa(\hat r)=\frac{n!}{\hat r_1!\hat r_2!\hat r_3!}$. For a fixed threshold $t>0$, the superlevel set
\[
S_{\hat r}(t)\triangleq
\{(u,v)\in\mathbb{R}^2:\; P_{p(u,v)}(\hat r)\ge t\}
=
\{(u,v):\; f_{\hat r}(u,v)\ge \log t\}
\]
is convex (indeed, compact), since $f_{\hat r}$ is concave in $(u,v)$.

In the certified intersection procedure we require a compact search domain containing all possible witnesses. A convenient sufficient condition is
\[
W\subseteq S_{\hat r^{(A)}}(t)\cap S_{\hat r^{(B)}}(t),
\qquad
t=\frac{\alpha-\tau}{m},\quad m=\lvert\Delta_{3,n}\rvert.
\]

An axis-aligned bounding box for $S_{\hat r}(t)$ (or for the intersection of two such sets) requires computing coordinate extrema such as
\[
u_{\max}=\sup\{u:\exists v\text{ s.t. }f_{\hat r}(u,v)\ge\log t\},
\qquad
v_{\max}=\sup\{v:\exists u\text{ s.t. }f_{\hat r}(u,v)\ge\log t\},
\]
and analogously $u_{\min},v_{\min}$. Boundary points satisfy the level-set equation
\[
\log\kappa(\hat r)+\hat r_1 u+\hat r_2 v
- n\log(1+e^u+e^v)
= \log t.
\]
Rearranging gives
\[
1+e^u+e^v
=
\exp\!\left(
\frac{\log\kappa(\hat r)+\hat r_1 u+\hat r_2 v-\log t}{n}
\right),
\]
which mixes $e^u$, $e^v$, and an exponential of an affine form in $(u,v)$. Solving for one variable given the other leads to a transcendental equation; no algebraic closed form exists in general. The same obstruction persists when bounding the intersection
\[
S_{\hat r^{(A)}}(t)\cap S_{\hat r^{(B)}}(t),
\]
whose extrema arise from convex programs with boundary conditions of the same form.

Although the boundary itself is not closed-form, each univariate slice admits a closed-form maximizer. Fix $u$ and consider $v\mapsto f_{\hat r}(u,v)$. Differentiation yields
\[
\frac{\partial f_{\hat r}}{\partial v}(u,v)
=
\hat r_2 - n\frac{e^v}{1+e^u+e^v}.
\]
Setting this derivative to zero gives
\[
\frac{e^v}{1+e^u+e^v}=\frac{\hat r_2}{n}
\quad\Longrightarrow\quad
e^v=\frac{\hat r_2}{n-\hat r_2}(1+e^u),
\]
and hence the unique slice maximizer
\[
v^*(u)=
\log\!\left(\frac{\hat r_2(1+e^u)}{n-\hat r_2}\right).
\]
By concavity, for fixed $u$ the superlevel set
\[
\{v:\;f_{\hat r}(u,v)\ge\log t\}
\]
is either empty or an interval
\[
[v_-(u),v_+(u)],
\]
with endpoints satisfying $f_{\hat r}(u,v)=\log t$. These endpoints have no closed form but can be computed robustly by one-dimensional bracketing and bisection. The analogous statements hold for $u^*(v)$ obtained by symmetry.

If a purely closed-form (but very loose) bound is desired, one may use the inequality
\[
1+e^u+e^v \le 3e^{\max\{0,u,v\}},
\]
which implies
\[
\log(1+e^u+e^v)\le \log 3+\max\{0,u,v\}.
\]
Substituting into $f_{\hat r}$ yields
\[
f_{\hat r}(u,v)
\ge
\log\kappa(\hat r)
+\hat r_1 u+\hat r_2 v
- n\log 3
- n\max\{0,u,v\}.
\]
Thus any $(u,v)$ satisfying
\[
\log\kappa(\hat r)
+\hat r_1 u+\hat r_2 v
- n\log 3
- n\max\{0,u,v\}
\ge \log t
\]
is guaranteed to lie in $S_{\hat r}(t)$. This condition is piecewise linear and can be unfolded into explicit linear inequalities by considering the cases $\max\{0,u,v\}=0$, $u$, or $v$.

Conversely, using the lower bound
\[
\log(1+e^u+e^v)\ge \max\{0,u,v\},
\]
we obtain the implication
\[
f_{\hat r}(u,v)
\le
\log\kappa(\hat r)
+\hat r_1 u+\hat r_2 v
- n\max\{0,u,v\}.
\]
Whenever the right-hand side is strictly less than $\log t$, the point $(u,v)$ is certainly outside $S_{\hat r}(t)$. This yields a closed-form (but conservative) exclusion region, from which an explicit axis-aligned outer box can be derived.

\end{document}